\definecolor{navy}{RGB}{0, 0, 128}
\newtheorem*{assumption*}{Assumption}
\newcommand{\cmark}{\textcolor{MidnightBlue}{\ding{51}}}%
\newcommand{\xmark}{\textcolor{BrickRed}{\ding{55}}}%
\DeclareMathOperator*{\argmin}{arg\,min}
\def\rvx{{\mathbf{x}}}
\def\rvy{{\mathbf{y}}}
\title{Preference Optimization by Estimating the \\Ratio of the Data Distribution}
\author{Yeongmin Kim$^1$\,\,\,   Heesun Bae$^1$\,\,\, Byeonghu Na$^1$\,\,\, Il-Chul Moon$^{1,2}$ \\
$^1$Korea Advanced Institute of Science and Technology (KAIST), $^2$summary.ai\\
\texttt{\{alsdudrla10, cat2507, byeonghu.na, icmoon\}@kaist.ac.kr}
}
\begin{document}

\maketitle

\begin{abstract}
Direct preference optimization (DPO) is widely used as a simple and stable method for aligning large language models (LLMs) with human preferences. This paper investigates a generalized DPO loss that enables a policy model to match the target policy from a likelihood ratio estimation perspective.  The ratio of the target policy provides a unique identification of the policy distribution without relying on reward models or partition functions. This allows the generalized loss to retain both simplicity and theoretical guarantees, which prior work such as $f$-PO fails to achieve simultaneously. We propose \textit{Bregman preference optimization} (BPO), a generalized framework for ratio matching that provides a family of objective functions achieving target policy optimality. BPO subsumes DPO as a special case and offers tractable forms for all instances, allowing implementation with a few lines of code. We further develop scaled Basu's power divergence (SBA), a gradient scaling method that can be used for BPO instances. The BPO framework complements other DPO variants and is applicable to target policies defined by these variants. In experiments, unlike other probabilistic loss extensions such as $f$-DPO or $f$-PO, which exhibit a trade-off between generation fidelity and diversity, instances of BPO improve both win rate and entropy compared with DPO. When applied to Llama-3-8B-Instruct, BPO achieves state-of-the-art performance among Llama-3-8B backbones, with a 55.9\% length-controlled win rate on AlpacaEval2.  Project page: \url{https://github.com/aailab-kaist/BPO}.
\end{abstract}

\section{Introduction}

Aligning large language models (LLMs) with human feedback has emerged as a promising fine-tuning paradigm to better reflect human preferences~\cite{achiam2023gpt, grattafiori2024llama, team2023gemini}. Reinforcement learning from human feedback (RLHF)~\cite{christiano2017deep, ouyang2022training} is a widely adopted alignment method that bridges implicit human preferences and model behaviors. This method typically involves two stages after supervised fine-tuning: (1) training a reward model that captures implicit human preferences, and (2) optimizing LLMs through a reinforcement learning pipeline guided by the learned reward model. This multi-stage RLHF pipeline is computationally intensive and prone to instability, motivating the development of direct preference optimization (DPO)~\cite{rafailov2023direct} as an alternative approach.

DPO is an alignment method that does not require an auxiliary reward model, thereby improving training efficiency and stability. DPO training reduces to logistic regression on offline preference datasets, following the Bradley–Terry model~\cite{bradley1952rank}. Subsequent studies have proposed extensions of the DPO loss functions~\cite{tang2024generalized,wang2024beyond,xu2025fpo}. From the perspective of distribution matching, $f$-PO~\cite{ji2024towards,xu2025fpo} reformulates the alignment objective as matching the policy model to the optimal policy defined by DPO, extending the loss using $f$-divergence. While this reformulation retains the target optimality, $f$-PO introduces additional complexity since it relies on a learned reward model and requires Monte Carlo estimation of partition functions. Importantly, its optimality cannot be guaranteed without incurring additional computational cost, as summarized in the third and fourth rows of \Cref{tab:main1}.

This paper investigates a generalized DPO loss that enables policy models to match the target policy without incurring additional computational burden. We show that the optimal policy can be expressed without relying on a learned reward model or partition functions by adopting a likelihood ratio perspective. The likelihood ratio of the target policy is a valid estimation target, as the ratio uniquely identifies the target policy distribution. We reformulate DPO as a ratio matching problem between the data preference ratio and the model ratio, and extend the loss using Bregman divergence~\cite{bregman1967relaxation, sugiyama2012density}, which we refer to as \textit{Bregman preference optimization} (BPO). BPO generalizes DPO by including it as a special case. We also propose a scaled Basu's power (SBA) divergence, a gradient scaling method for BPO instances. In addition, we show that the proposed loss generalization can be applied in an orthogonal manner to existing DPO variants that can be expressed as logistic regression.

We empirically evaluate several instances of BPO against baselines, measuring both generation fidelity (win rate with GPT-4 judge) and generation diversity. In contrast to other probabilistic loss extensions such as $f$-DPO~\cite{wang2024beyond} and $f$-PO~\cite{xu2025fpo}, which show a clear trade-off between win rate and diversity, BPO instances improve both win rate and entropy over DPO, as summarized in \Cref{fig:1}. The proposed SBA instance achieves the best trade-off between win rate and diversity among all instances. We also apply the proposed loss to Llama-3-8B-Instruct~\cite{grattafiori2024llama} and achieve a 55.9\% length-controlled win rate against GPT-4 Turbo on AlpacaEval2~\cite{alpaca_eval}. To the best of our knowledge, this is state-of-the-art performance among Llama-3-8B backbone models.

\begin{table}[t]
 \vspace{-3mm}
     \begin{minipage}[h]{0.675\textwidth}
    \centering

    \caption{Summary of probabilistic DPO extensions. \textbf{O}: optimality preservation, \textbf{S}: simplicity without extra training cost, \textbf{G}: generality for multiple objectives. Refer to \Cref{sec:B} for baseline details.}
     \adjustbox{max width=\textwidth}{%
     \renewcommand{\arraystretch}{2.9}
     \begin{tabular}{lcccc }
        \toprule 
        \Large Name &\Large \textbf{O}& \Large \textbf{S}&\Large \textbf{G}&\Large $\text{Loss} \left[ \textbf{S}(\text{\cmark}): (\rvx, \rvy_w, \rvy_l)\sim p_{\text{data}}\right]$ \\
            \midrule
            \midrule
          \Large DPO~\cite{rafailov2023direct} &\Large\cmark&\Large\cmark&\Large\xmark&\large $\displaystyle-\log \sigma \left( \beta \log \frac{\pi_\theta(\rvy_w|\rvx)}{\pi_{\mathrm{ref}}(\rvy_w|\rvx)} 
- \beta \log \frac{\pi_\theta(\rvy_l|\rvx)}{\pi_{\textrm{ref}}(\rvy_l|\rvx)} \right)$  \\
          \Large$f$-DPO~\cite{wang2024beyond} &\Large\xmark&\Large\cmark&\Large\cmark&\large $\displaystyle- \log \sigma \left( \beta f'\left(\frac{\pi_\theta(\rvy_w|\rvx)}{\pi_{\mathrm{ref}}(\rvy_w|\rvx)} \right) 
- \beta f'\left(\frac{\pi_\theta(\rvy_l|\rvx)}{\pi_{\mathrm{ref}}(\rvy_l|\rvx)} \right) \right)$  \\
 \Large$f$-PO~\cite{xu2025fpo}&\Large\cmark&\Large\xmark&\Large\cmark&\large $\displaystyle D_f\left( \pi_{\theta}(\rvy|\rvx)\bigg|\bigg|\frac{1}{Z(\rvx)}\pi_{\text{ref}}(\rvy|\rvx)\text{exp}\left(\frac{1}{\beta}r_{\phi^*}(\rvx,\rvy)\right)\right) $ \\
            \Large$f$-PO~\cite{xu2025fpo} &\Large\xmark&\Large\cmark&\Large\cmark& \large$\displaystyle f\left( \sigma \left(\beta \log \frac{\pi_\theta(\rvy_w|\rvx)}{\pi_{\mathrm{ref}}(\rvy_w|\rvx)} 
- \beta \log \frac{\pi_\theta(\rvy_l|\rvx)}{\pi_{\textrm{ref}}(\rvy_l|\rvx)} \right)\right) $ \\
        \midrule
          \large BPO (Ours)  &\Large \cmark&\Large\cmark&\Large\cmark& \large$\displaystyle h'(R_{\theta})R_{\theta}-h(R_{\theta})-h'\left(R_{\theta}^{-1}\right), R_{\theta}:\left[\frac{\pi_{\theta}(\rvy_l|\rvx)\pi_{\text{ref}}(\rvy_w|\rvx)}{\pi_{\theta}(\rvy_w|\rvx)\pi_{\text{ref}}(\rvy_l|\rvx)}\right]^{\beta}$ \\
        \bottomrule
    \end{tabular}
    \renewcommand{\arraystretch}{1.0}
    \label{tab:main1}
    }
    \end{minipage}
    \hfill
    \begin{minipage}[t]{0.32\textwidth}
    \setlength{\abovecaptionskip}{1pt}
    \centering
    \raisebox{-5.5mm}{\includegraphics[width=1.0\textwidth]{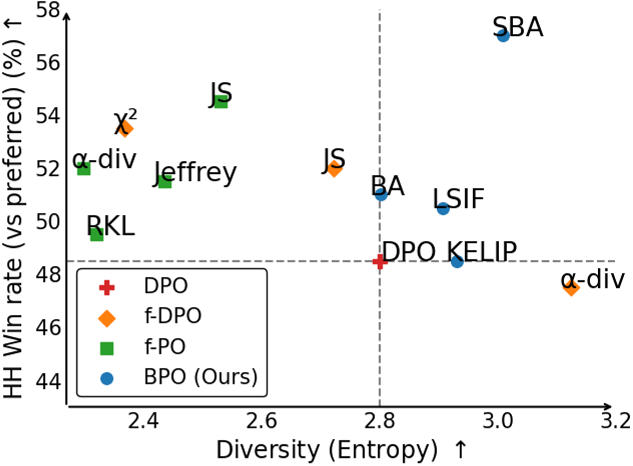}}
    \vspace{1mm}
    \captionof{figure}{The trade-off between fidelity and diversity across instances from different probabilistic preference optimization frameworks on dialogue generation with Pythia 2.8B.}
    \label{fig:1}
    \end{minipage}
    \vspace{-7mm}
\end{table}

\section{Preliminaries}

\subsection{Alignment of large language models with human preference}
We consider an autoregressive language model $\pi_{\theta}$ as a policy model that generates a response sequence $\rvy=[y^1, \ldots, y^{L}]$ conditioned on a prompt $\rvx$, with probability $\pi_{\theta}(\rvy|\rvx)=\prod_{k=1}^{L}\pi_{\theta}(y^k|\rvx, \rvy^{<k})$, where $L$ denotes the sequence length. Alignment methods commonly begin with a pre-trained large language model (LLM), followed by supervised fine-tuning (SFT) on a high-quality dataset. The resulting model is used as the reference model $\pi_{\text{ref}}$, and the policy model $\pi_{\theta}$ is initialized from $\pi_{\text{ref}}$.

\textbf{Reinforcement learning from human feedback (RLHF):} The alignment method RLHF~\cite{bai2022training,ouyang2022training,stiennon2020learning,ziegler2019fine} consists of two phases following SFT. The first phase involves learning a reward model $r_{\phi}$. This phase requires access to an offline pairwise preference dataset, consisting of tuples $(\rvx: \text{prompt}, \rvy_w: \text{preferred response}, \rvy_l: \text{dispreferred response})$ drawn from the underlying preference data distribution $p_{\text{data}}(\rvy_w\succ \rvy_l |\rvx) p_{\text{prompt}}(\rvx)$. The learning objective of the reward model is
\begin{align}
&\mathcal{L}_{\text{reward}}(r_{\phi}; p_{\textrm{data}}) = -\mathbb{E}_{p_{\textrm{data}}(\rvy_w \succ \rvy_l|\rvx)}[\log{\sigma(r_{\phi}(\rvx,\rvy_w)-r_{\phi}(\rvx,\rvy_l))}], \label{eq:reward}
\end{align}
where $\sigma$ denotes the logistic function. The prompt $\rvx$ is drawn from $p_{\textrm{prompt}}(\rvx)$, which we omit in the following formulations for notational simplicity. From the optimality condition of logistic regression, the optimal reward model $r_{\phi^*}$ that minimizes $\mathcal{L}_{\text{reward}}$ satisfies $p_{\text{data}}(\rvy_w\succ \rvy_l |\rvx)=\sigma(r_{\phi^*}(\rvx,\rvy_w)-r_{\phi^*}(\rvx,\rvy_l))$, also known as the Bradley–Terry model~\cite{bradley1952rank}. The next phase is RL fine-tuning based on proximal policy optimization (PPO)~\cite{schulman2017proximal}, formulated as
\begin{align}
& \mathcal{L_{\textrm{RLHF}}}(\pi_{\theta};\pi_{\textrm{ref}},r_{\phi^{*}}) = - \mathbb{E}_{\pi_{\theta}(\rvy|\rvx)}[r_{\phi^{*}}(\rvx,\rvy)] + \beta D_{\text{KL}}(\pi_{\theta}(\rvy|\rvx)||\pi_{\textrm{ref}}(\rvy|\rvx)). \label{eq:rlhf}
\end{align}
$\mathcal{L_{\textrm{RLHF}}}$ consists of a reward maximization term and a 
$\beta$-weighted reverse KL regularization term that penalizes deviation of the policy from the reference model. Due to the complexity and instability of this multi-stage pipeline, direct preference optimization has become a strong alternative for alignment.

\textbf{Direct preference optimization (DPO):}
The alignment method DPO~\cite{rafailov2023direct} unifies the objectives \cref{eq:reward,eq:rlhf} into a single objective without the reward model. \cref{eq:rlhf} has a closed-form solution:
\begin{align}
\pi_{\theta^*}(\rvy|\rvx)=\argmin_{\pi_\theta}{\mathcal{L_{\textrm{RLHF}}}}=\frac{1}{Z(\rvx)}\pi_{\text{ref}}(\rvy|\rvx)\text{exp}\left(\frac{1}{\beta}r_{\phi^*}(\rvx,\rvy)\right), \label{eq:opti_rlhf}
\end{align}
where $Z(\rvx)=\sum_{\rvy}\pi_{\text{ref}}(\rvy|\rvx)\text{exp}\big(\frac{1}{\beta}r_{\phi^*}(\rvx,\rvy)\big)$ is the partition function. From \cref{eq:opti_rlhf}, DPO defines a new reward model $r_{\text{DPO}}(\rvx,\rvy)=\beta\log\frac{\pi_{\theta}(\rvy|\rvx)}{\pi_{\text{ref}}(\rvy|\rvx)}+\beta\log Z(\rvx)$, which is parameterized by the policy and reference models. Substituting $r_{\text{DPO}}$ into $\mathcal{L}_{\text{reward}}(r_{\text{DPO}};p_{\text{data}})$ yields the following objective:
\begin{align}
& \mathcal{L_{\textrm{DPO}}}(\pi_{\theta};\pi_{\textrm{ref}},p_{\textrm{data}})=-\mathbb{E}_{p_{\textrm{data}}(\rvy_w \succ \rvy_l|\rvx)}\left[\log{\sigma\left(\beta\log{\frac{\pi_{\theta}(\rvy_w|\rvx)}{\pi_{\textrm{ref}}(\rvy_w|\rvx)}} - \beta\log{\frac{\pi_{\theta}(\rvy_l|\rvx)}{\pi_{\textrm{ref}}(\rvy_l|\rvx)}}\right)}\right], \label{eq:dpo}
\end{align}
which enables the policy model to learn directly from a preference dataset in a supervised manner. The \textit{optimal policy} $\pi_{\theta^*}$ that minimizes $\mathcal{L}_{\text{DPO}}$ matches the optimal solution in \cref{eq:opti_rlhf}, and it satisfies:
\begin{align}
p_{\textrm{data}}(\rvy_w \succ \rvy_l|\rvx) =  \sigma\left(\beta\log{\frac{\pi_{\theta^*}(\rvy_w|\rvx)}{\pi_{\textrm{ref}}(\rvy_w|\rvx)}} - \beta\log{\frac{\pi_{\theta^*}(\rvy_l|\rvx)}{\pi_{\textrm{ref}}(\rvy_l|\rvx)}}\right). \label{eq:opti_dpo}
\end{align}
Despite the theoretical guarantees, optimality is often not achieved in practice because of limited model capacity and imperfect optimization. Since the practical solution depends on the form of the objective function, generalizing the objective provides potential benefits.

\begin{table}[t]
 \vspace{-3mm}
     \begin{minipage}[h]{0.65\textwidth}
    \centering
    \small
    \caption{Examples of Bregman divergence instances defined by choices of $h$, and their 
 corresponding BPO loss functions.}
     \adjustbox{max width=\textwidth}{%
     \renewcommand{\arraystretch}{1.7}
        \begin{tabular}{l cc }
        \toprule
             Name & $h(R)$ & $\mathcal{L}^{h}_{\textrm{BPO}}(R_{\theta};p_{\text{data}})$   \\
            \midrule
            \midrule
          LR (DPO) & $\frac{R \log{R} - (1+R) \log{(1+R)}}{2}$ & $\mathbb{E}_{p_{\text{data}}}[\log{(R_{\theta}+1)}]$ \\
          KLIEP  & $R \log{R} - R$ &  $\mathbb{E}_{p_{\text{data}}}[R_{\theta} + \log{R_{\theta}}]$ \\
         LSIF & $(R-1)^2$& $\mathbb{E}_{p_{\text{data}}}[R_{\theta}^2 - \frac{2}{R_{\theta}}]$ \\
           BA & $\frac{(R^{1+\lambda}-R)}{\lambda}$ & $\mathbb{E}_{p_{\text{data}}}[R_{\theta}^{\lambda+1} - \frac{\lambda+1}{\lambda}R_{\theta}^{-\lambda}]$\\
        \midrule
         SBA (Ours)  & $\frac{(R^{1+\lambda}-R)}{s\lambda (\lambda+1)}$ & $\mathbb{E}_{p_{\text{data}}}\left[\frac{1}{s(\lambda+1)}R_{\theta}^{\lambda+1} - \frac{1}{s\lambda}R_{\theta}^{-\lambda}\right]$\\
        \bottomrule
    \end{tabular}
    \renewcommand{\arraystretch}{1.0}
    \label{tab:main2}
    }
    \end{minipage}
    \hfill
    \begin{minipage}[t]{0.35\textwidth}
    \setlength{\abovecaptionskip}{1pt}
    \centering
    \raisebox{-13mm}{\includegraphics[width=1.0\textwidth]{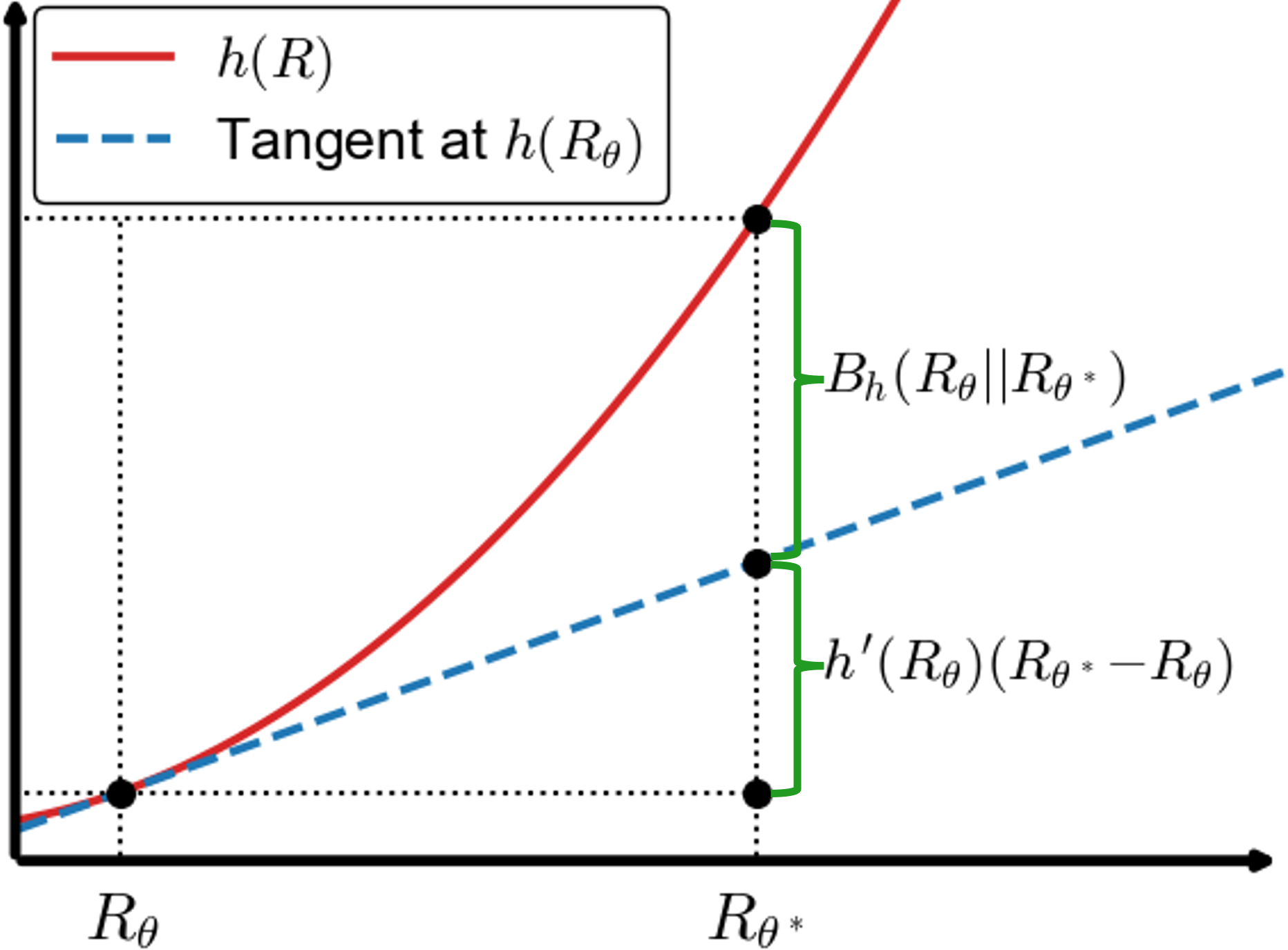}}
    \captionof{figure}{Point-wise Bregman divergence defined by $h$, between the model $R_\theta$ and the target  $R_{\theta^*}$.}
    \label{fig:2}
    \end{minipage}
    \vspace{-5mm}
\end{table}

\subsection{Likelihood ratio estimation under Bregman divergence}
\label{sec:2.2}
Given two probability distributions $p_{\text{de}}(\rvx)$ and $p_{\text{nu}}(\rvx)$, likelihood ratio estimation aims to learn a ratio model $R_{\theta}(\rvx)$ that approximates $R_{\text{data}}(\rvx):=\frac{p_{\text{nu}}(\rvx)}{p_{\text{de}}(\rvx)}$, based on i.i.d. samples from both distributions. Probabilistic classification via logistic regression~\cite{gutmann2012noise} is the most commonly used approach. Traditional methods such as the Kullback-Leibler importance estimation procedure (KLIEP)~\cite{NIPS2007_72da7fd6, sugiyama2008direct} and least-squares importance fitting (LSIF)~\cite{huang2006correcting, gretton2009covariate, kanamori2009least} have also been widely used. These methods can be unified under the Bregman divergence~\cite{bregman1967relaxation} framework in~\cite{sugiyama2012density}, resulting in the following formulation:
\begin{align}
D_{h}\big(R_{\text{data}}(\rvx)||&R_{\theta}(\rvx)\big) =\int{p_{\text{de}}(\rvx)}B_{h}\big(R_{\text{data}}(\rvx)||R_{\theta}(\rvx)\big) \text{d}\rvx \nonumber \\ 
& = \int{p_{\text{de}}(\rvx)}\left( h\big(R_{\text{data}}(\rvx)\big) -h(R_{\theta}\big(\rvx)\big) - h'\big(R_{\theta}(\rvx)\big) \big(R_{\text{data}}(\rvx) - R_{\theta}(\rvx)\big) \right) \text{d}\rvx, \label{eq:bregman}
\end{align}
where $h$ denotes a strictly convex and twice continuously differentiable function with derivative function $h'$, and $B_h$ is the pointwise Bregman divergence which measures the error of the linear approximation as illustrated in \Cref{fig:2}. Specific instances are summarized in \Cref{tab:main2}, showing that previous likelihood ratio estimation methods differ only by the choice of $h$. Among various instances, Basu’s power (BA) divergence~\cite{basu1998robust}, defined for $\lambda > -1$, smoothly interpolates between KLIEP (at $\lambda = 0$) and LSIF (at $\lambda = 1$). Inspired by the success of generalized likelihood ratio estimation in recent generative modeling studies~\cite{lou2024discrete,pmlr-v202-kim23i,kim2024training}, we apply this extension to generalize the DPO loss.

\section{Methods}
\label{sec:4}
This section introduces the proposed \textit{Bregman Preference Optimization} (BPO). \Cref{sec:4.1} reformulates the DPO objective as a likelihood ratio estimation problem. \Cref{sec:4.2} extends the DPO objective via Bregman divergence under the likelihood ratio estimation perspective. \Cref{sec:4.3} analyzes instances of BPO and introduces the scaled Basu’s power (SBA) divergence within this framework. Finally, \Cref{sec:4.4} discusses the applicability of BPO to other DPO variants in an orthogonal manner.

\subsection{Preference optimization as likelihood ratio estimation}
\label{sec:4.1}
In contrast to prior works~\cite{ji2024towards,xu2025fpo} that characterize the optimal policy using a learned reward model $r_{\phi^*}$ and a partition function $Z(\rvx)$ as in \cref{eq:opti_rlhf}, we aim to characterize the optimal policy without such modeling complexity.
\begin{restatable}{proposition}{propositiona}     \label{prop}
Let the optimal policy $\pi_{\theta^*}:=\argmin_{\pi_{\theta}}{\mathcal{L}_{\text{DPO}}}(\pi_{\theta};\pi_{\textrm{ref}},p_{\textrm{data}})$, the following holds:
\begin{equation}
\frac{\pi_{\theta^*}(\rvy_w|\rvx)}{\pi_{\theta^*}(\rvy_l|\rvx)} = \frac{\pi_{\text{ref}}(\rvy_w|\rvx)}{\pi_{\text{ref}}(\rvy_l|\rvx)} \times \left( \frac{p_{\text{data}}(\rvy_w\succ \rvy_l |\rvx)}{p_{\text{data}}(\rvy_w\prec \rvy_l |\rvx)}\right)^{1/\beta}. \label{eq:opti_new}
\end{equation}
\end{restatable}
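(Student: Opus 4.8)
The plan is to invert the first-order optimality condition for DPO, \cref{eq:opti_dpo}, which I take as given from the excerpt. That condition says that at the optimum the Bradley--Terry probability predicted by the implicit reward matches the data preference probability; solving it for the policy ratio yields the claim directly, so the whole argument is essentially a one-line algebraic inversion of a logistic relation.

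First I would rewrite \cref{eq:opti_dpo} compactly. Define the implicit reward gap $z := \beta\log\frac{\pi_{\theta^*}(\rvy_w|\rvx)}{\pi_{\text{ref}}(\rvy_w|\rvx)} - \beta\log\frac{\pi_{\theta^*}(\rvy_l|\rvx)}{\pi_{\text{ref}}(\rvy_l|\rvx)}$, so that \cref{eq:opti_dpo} reads $p_{\text{data}}(\rvy_w\succ\rvy_l|\rvx) = \sigma(z)$. Since the two preference outcomes are complementary, $p_{\text{data}}(\rvy_w\prec\rvy_l|\rvx) = 1 - \sigma(z) = \sigma(-z)$, using $1-\sigma(z)=\sigma(-z)$.

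Next I would form the odds ratio. Dividing gives $\frac{p_{\text{data}}(\rvy_w\succ\rvy_l|\rvx)}{p_{\text{data}}(\rvy_w\prec\rvy_l|\rvx)} = \frac{\sigma(z)}{1-\sigma(z)} = e^{z}$, the standard logit--odds identity. Substituting the definition of $z$ and collecting logarithms gives $e^{z} = \left(\frac{\pi_{\theta^*}(\rvy_w|\rvx)\,\pi_{\text{ref}}(\rvy_l|\rvx)}{\pi_{\theta^*}(\rvy_l|\rvx)\,\pi_{\text{ref}}(\rvy_w|\rvx)}\right)^{\beta}$. Raising both sides to the power $1/\beta$ and moving the reference terms to the other side produces exactly \cref{eq:opti_new}.

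I expect no substantive obstacle: the derivation is routine once \cref{eq:opti_dpo} is in hand. The only point meriting care is well-definedness. The implicit reward $r_{\text{DPO}}$ is identifiable only up to an additive prompt-dependent constant (the $\beta\log Z(\rvx)$ term), so the absolute probabilities $\pi_{\theta^*}(\rvy|\rvx)$ are not pinned down by \cref{eq:opti_dpo} alone. However, the proposition concerns the \emph{ratio} $\pi_{\theta^*}(\rvy_w|\rvx)/\pi_{\theta^*}(\rvy_l|\rvx)$, in which this prompt-dependent constant cancels; the stated equality is therefore invariant to that non-identifiability and holds unambiguously. I would flag this cancellation explicitly, since it is precisely what lets the optimum be characterized through a ratio rather than through a reward model and partition function, as emphasized in \cref{sec:4.1}.
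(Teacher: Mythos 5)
Your proposal is correct and follows essentially the same route as the paper's proof in \Cref{sec:A.1}: both invert the Bradley--Terry optimality condition \cref{eq:opti_dpo} to obtain the preference odds as $e^{z}$ (the paper equivalently computes $\tfrac{1-p}{p}=e^{-z}$), then raise to the power $1/\beta$ and rearrange. Your added remark on the cancellation of the prompt-dependent constant is a sensible clarification but does not change the argument.
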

Please refer to \Cref{sec:A.1} for the proof. \cref{eq:opti_new} shows that the likelihood ratio of optimal policy $\pi_{\theta^*}$ can be specified solely using the reference model $\pi_{\text{ref}}$ and the preference data distribution $p_{\text{data}}$. The ratio $\frac{\pi_{\theta^*}(\rvy_w|\rvx)}{\pi_{\theta^*}(\rvy_l|\rvx)}$ is equivalent to the \textit{concrete score}~\cite{meng2022concrete,lou2024discrete} (up to a constant), and the concrete score satisfies the completeness property~\cite{lyu2012interpretation,meng2022concrete}. This means that the concrete score can uniquely identify the distribution $\pi_{\theta^*}$. Therefore, the matching $\frac{\pi_{\theta}(\rvy_w|\rvx)}{\pi_{\theta}(\rvy_l|\rvx)}$ to $\frac{\pi_{\theta^*}(\rvy_w|\rvx)}{\pi_{\theta^*}(\rvy_l|\rvx)}$ is sufficient for $\pi_{\theta}$ to recover $\pi_{\theta^*}$, providing a theoretical justification for ratio matching. To build a data-driven estimator for this matching, we rearrange \cref{eq:opti_new}, as detailed in \cref{eq:prop1_init} to \cref{eq:prop1_end}, and reformulate preference optimization as a matching problem between data ratio $R_{\text{data}}$ and model ratio $R_{\theta}$, defined as:
\begin{equation}
R_{\text{data}}(\rvx, \rvy_w, \rvy_l):=\frac{p_{\text{data}}(\rvy_w\prec \rvy_l |\rvx)}{p_{\text{data}}(\rvy_w \succ\rvy_l |\rvx)}, \quad \quad R_{\theta}(\rvx,\rvy_w, \rvy_l):=\left[\frac{\pi_{\theta}(\rvy_l|\rvx)\pi_{\text{ref}}(\rvy_w|\rvx)}{\pi_{\theta}(\rvy_w|\rvx)\pi_{\text{ref}}(\rvy_l|\rvx)}\right]^{\beta}. \label{eq:def_R}
\end{equation}

\subsection{Bregman preference optimization via ratio matching}
\label{sec:4.2}

As discussed in \cref{sec:4.1}, preference optimization can be formulated as a ratio matching between $R_{\text{data}}$ and $R_{\theta}$. We define the loss using the Bregman ratio matching framework in \Cref{sec:2.2} as:
\begin{align}
  D_{h}(R_{\text{data}}||R_{\theta}) = \mathbb{E}_{p_{\textrm{data}}(\rvy_w\succ \rvy_l |\rvx)}[&h(R_{\textrm{data}})-h(R_{\theta}) - h'(R_{\theta})(R_{\textrm{data}} - R_{\theta})], \label{eq:breg_ours}
\end{align}
where $R_{\textrm{data}}$ and $R_{\theta}$ denote the ratios evaluated at $(\rvx, \rvy_w, \rvy_l)$ on the right-hand side, $h$ denotes a strictly convex and twice continuously differentiable function with derivative function $h'$.

\begin{restatable}{theorem}{thma} \label{thma}
 (Optimality) Under sufficient model capacity, $\argmin_{\pi_{\theta}}D_{h}(R_{\text{data}}||R_{\theta}) = \pi_{\theta^{*}}$.
\end{restatable}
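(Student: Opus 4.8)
The plan is to exploit the defining property of the Bregman divergence: pointwise nonnegativity together with a strict equality condition. Since $h$ is strictly convex and twice continuously differentiable, for any $a,b>0$ the pointwise divergence $B_h(a||b) = h(a) - h(b) - h'(b)(a-b)$ measures the vertical gap between $h(a)$ and the tangent line to $h$ at $b$ (see \Cref{fig:2}); hence $B_h(a||b) \ge 0$, with equality if and only if $a=b$. Taking the expectation under $p_{\text{data}}(\rvy_w \succ \rvy_l|\rvx)$ in \cref{eq:breg_ours} preserves this, so $D_h(R_{\text{data}}||R_\theta) \ge 0$ and the minimal value $0$ is attained exactly when $R_\theta = R_{\text{data}}$ for $p_{\text{data}}$-almost every triple $(\rvx, \rvy_w, \rvy_l)$.

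First I would confirm that this minimum is attainable. Under sufficient model capacity $\pi_\theta$ can realize an arbitrary likelihood-ratio profile, so some $\theta$ satisfies $R_\theta = R_{\text{data}}$ everywhere on the support of $p_{\text{data}}$; consequently $\argmin_{\pi_\theta} D_h(R_{\text{data}}||R_\theta)$ is precisely the set of policies meeting the pointwise matching condition $R_\theta = R_{\text{data}}$.

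Next I would translate this matching condition into a statement about $\pi_\theta$. Substituting the definitions from \cref{eq:def_R}, the equality $R_\theta = R_{\text{data}}$ reads
\begin{equation*}
\left[\frac{\pi_\theta(\rvy_l|\rvx)\,\pi_{\text{ref}}(\rvy_w|\rvx)}{\pi_\theta(\rvy_w|\rvx)\,\pi_{\text{ref}}(\rvy_l|\rvx)}\right]^{\beta} = \frac{p_{\text{data}}(\rvy_w \prec \rvy_l|\rvx)}{p_{\text{data}}(\rvy_w \succ \rvy_l|\rvx)},
\end{equation*}
which rearranges to $\frac{\pi_\theta(\rvy_w|\rvx)}{\pi_\theta(\rvy_l|\rvx)} = \frac{\pi_{\text{ref}}(\rvy_w|\rvx)}{\pi_{\text{ref}}(\rvy_l|\rvx)}\left(\frac{p_{\text{data}}(\rvy_w \succ \rvy_l|\rvx)}{p_{\text{data}}(\rvy_w \prec \rvy_l|\rvx)}\right)^{1/\beta}$. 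This is exactly the optimal likelihood ratio characterized in \Cref{prop}, so every minimizer shares the likelihood ratio of $\pi_{\theta^*}$.

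The final and most delicate step is to upgrade equality of likelihood ratios to equality of distributions. A ratio $\pi_\theta(\rvy_w|\rvx)/\pi_\theta(\rvy_l|\rvx)$ only determines $\pi_\theta(\cdot|\rvx)$ up to a per-prompt normalizing constant, so pointwise ratio matching does not by itself yield $\pi_\theta = \pi_{\theta^*}$. Here I would invoke the completeness property of the concrete score noted after \Cref{prop}: since $\frac{\pi_{\theta^*}(\rvy_w|\rvx)}{\pi_{\theta^*}(\rvy_l|\rvx)}$ agrees (up to a constant) with the concrete score, which uniquely identifies a normalized distribution, matching this ratio forces $\pi_\theta = \pi_{\theta^*}$. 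Chaining the three steps gives $\argmin_{\pi_\theta} D_h(R_{\text{data}}||R_\theta) = \pi_{\theta^*}$. The main obstacle is precisely this identifiability argument: nonnegativity and the algebraic rearrangement are routine, whereas passing from a matched ratio to global distributional equality rests essentially on completeness of the concrete score and not on any specific choice of $h$.
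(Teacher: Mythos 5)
Your proposal is correct and follows essentially the same route as the paper's proof: pointwise nonnegativity of the Bregman divergence under full support of $p_{\text{data}}$ forces $R_\theta = R_{\text{data}}$, which via \Cref{prop} matches the likelihood ratio of $\pi_{\theta^*}$, and the completeness of the concrete score then upgrades ratio equality to $\pi_\theta = \pi_{\theta^*}$. Your added remarks on attainability under sufficient capacity and on the per-prompt normalization issue are slightly more explicit than the paper's treatment, but the argument is the same.
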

Please see \Cref{sec:A.2} for the proof. \Cref{thma} implies that the objective $D_{h}(R_{\text{data}}||R_{\theta})$ guarantees target optimality under a valid $h$, but it is intractable since evaluating $R_{\text{data}}$ at a given point $(\rvx,\rvy_w, \rvy_l)$ is infeasible. As preference optimization provides only samples from $p_{\text{data}}(\rvy_w \succ \rvy_l \mid \rvx)$ without direct access to the distribution, we propose a tractable alternative inspired by implicit score matching~\cite{hyvarinen2005estimation}:
\begin{align}
  \mathcal{L}^{h}_{\textrm{BPO}}(R_{\theta};p_{\textrm{data}}) :=\mathbb{E}_{p_{\textrm{data}}(\rvy_w \succ \rvy_l | \rvx)}\left[h'(R_{\theta})R_{\theta}-h(R_{\theta})-h'\left(R_{\theta}^{-1}\right)\right], \label{eq:BPO}
  \end{align}
and the equivalence is guaranteed by the following theorem.
\begin{restatable}{theorem}{thmb} \label{thmb} $\mathcal{L}^{h}_{\textrm{BPO}}(R_{\theta};p_{\textrm{data}})=D_{h}(R_{\text{data}}||R_{\theta})+C$, where C is constant with respect to $\theta$.
\end{restatable}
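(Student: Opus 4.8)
The plan is to expand the Bregman divergence $D_h(R_{\text{data}}\|R_\theta)$ in \cref{eq:breg_ours} term by term, isolate the piece independent of $\theta$, and show that everything else already agrees with $\mathcal{L}^h_{\textrm{BPO}}$ in \cref{eq:BPO}. Distributing the expectation gives
\begin{equation}
D_h(R_{\text{data}}\|R_\theta) = \mathbb{E}_{p_{\textrm{data}}}[h(R_{\textrm{data}})] - \mathbb{E}_{p_{\textrm{data}}}[h(R_\theta)] - \mathbb{E}_{p_{\textrm{data}}}[h'(R_\theta)R_{\textrm{data}}] + \mathbb{E}_{p_{\textrm{data}}}[h'(R_\theta)R_\theta]. \nonumber
\end{equation}
The first term $\mathbb{E}_{p_{\textrm{data}}}[h(R_{\textrm{data}})]$ does not depend on $\theta$, so it can be absorbed into the constant $C$; the second and fourth terms already coincide with the $-h(R_\theta)$ and $h'(R_\theta)R_\theta$ pieces of \cref{eq:BPO}. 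Hence the entire claim reduces to the single identity $\mathbb{E}_{p_{\textrm{data}}}[h'(R_\theta)R_{\textrm{data}}] = \mathbb{E}_{p_{\textrm{data}}}[h'(R_\theta^{-1})]$, which is precisely the step that removes the intractable factor $R_{\textrm{data}}$, mirroring the passage from explicit to implicit score matching.

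To establish this identity, the key step is a change of the winner/loser labels, so first I would make the sampling model explicit. A preference triple is generated by drawing $(\rvx,\rvy_a,\rvy_b)$ from a base distribution $\mu$ that is symmetric in the two responses and then designating the winner with probability $p_{\textrm{data}}(\rvy_a\succ\rvy_b\mid\rvx)$, so the density of the ordered triple with winner $\rvy_a$ and loser $\rvy_b$ equals $\mu(\rvx,\rvy_a,\rvy_b)\,p_{\textrm{data}}(\rvy_a\succ\rvy_b\mid\rvx)$. Substituting $R_{\textrm{data}} = p_{\textrm{data}}(\rvy_w\prec\rvy_l\mid\rvx)/p_{\textrm{data}}(\rvy_w\succ\rvy_l\mid\rvx)$ into the expectation cancels the sampling factor $p_{\textrm{data}}(\rvy_a\succ\rvy_b\mid\rvx)$ against the denominator, leaving a sum whose remaining preference factor is $p_{\textrm{data}}(\rvy_b\succ\rvy_a\mid\rvx)$. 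Relabeling the dummy variables $\rvy_a\leftrightarrow\rvy_b$ and invoking the symmetry $\mu(\rvx,\rvy_b,\rvy_a)=\mu(\rvx,\rvy_a,\rvy_b)$ then rewrites the summand as $p_{\textrm{data}}(\rvy_a\succ\rvy_b\mid\rvx)\,h'(R_\theta(\rvx,\rvy_b,\rvy_a))$, i.e. an expectation back under $p_{\textrm{data}}$ but with the model ratio evaluated on swapped arguments.

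Finally I would apply the antisymmetry of the model ratio: reading off \cref{eq:def_R}, swapping the winner and loser arguments inverts the ratio, $R_\theta(\rvx,\rvy_l,\rvy_w) = R_\theta(\rvx,\rvy_w,\rvy_l)^{-1}$, so the relabeled term is exactly $\mathbb{E}_{p_{\textrm{data}}}[h'(R_\theta^{-1})]$. This closes the identity and yields the theorem with $C = \mathbb{E}_{p_{\textrm{data}}}[h(R_{\textrm{data}})]$. I expect the main obstacle to be the bookkeeping in setting up the sampling model correctly: one must justify that the expectation under $p_{\textrm{data}}(\rvy_w\succ\rvy_l\mid\rvx)$ factors through a \emph{symmetric} base measure $\mu$ so that the winner/loser relabeling is legitimate, and that the sampling weight $p_{\textrm{data}}(\rvy_w\succ\rvy_l\mid\rvx)$ genuinely cancels the denominator of $R_{\textrm{data}}$. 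Once that measure-theoretic framing is in place, the remaining algebra (the cancellation and the inversion $R_\theta(\rvx,\rvy_l,\rvy_w)=R_\theta^{-1}$) is routine.
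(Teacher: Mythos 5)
Your proposal is correct and follows essentially the same route as the paper's proof: expand the Bregman divergence, absorb $\mathbb{E}_{p_{\textrm{data}}}[h(R_{\textrm{data}})]$ into the constant, use the factor $R_{\textrm{data}}$ as an importance weight to convert the expectation to one under $p_{\textrm{data}}(\rvy_w \prec \rvy_l \mid \rvx)$, then swap the winner/loser labels and invoke $R_\theta(\rvx,\rvy_l,\rvy_w)=R_\theta(\rvx,\rvy_w,\rvy_l)^{-1}$. Your explicit symmetric base measure $\mu$ is just a more careful formalization of the relabeling step the paper performs implicitly.
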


The proof is provided in \Cref{sec:A.3}. The intractable term $R_{\text{data}}$ does not appear in $\mathcal{L}^{h}_{\textrm{BPO}}$, and the statistical information of $R_{\text{data}}$ is captured entirely via samples from $p_{\textrm{data}}$. As the value of function $R_{\theta}$ at the point $(\rvx, \rvy_w, \rvy_l)$ computed through the feed-forward passes of $\pi_{\theta}$ and $\pi_{\text{ref}}$, $R_{\theta}$ is appropriate to retain it inside the expectation. To analyze the learning dynamics of $\mathcal{L}^{h}_{\textrm{BPO}}(R_{\theta};p_{\textrm{data}})$, we provide the following gradient analysis:

\begin{restatable}{proposition}{propositionb} \label{prop2}
  (Gradient Analysis) Let the gradient of the BPO objective be expressed as:
  \begin{equation}
      \nabla_{\theta}\mathcal{L}^{h}_{\textrm{BPO}}(R_{\theta};p_{\textrm{data}})=\mathbb{E}_{p_{\textrm{data}}(\rvy_w \succ \rvy_l | \rvx)}\left[G_{h}(R_{\theta})\nabla_{\theta}R_{\theta}\right], \label{eq:grad}
  \end{equation}
  where we define $G_{h}(R_{\theta})$ as the magnitude of gradient. If $h$ is strictly convex and twice continuously differentiable, then $G_{h}(R_{\theta})>0$ for all $R_{\theta}$. 
\end{restatable}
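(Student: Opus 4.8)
The plan is to reduce the vector gradient to a one-dimensional derivative via the chain rule, exploit an exact cancellation, and then read off the sign from strict convexity. Since the expectation in \cref{eq:BPO} is taken against $p_{\textrm{data}}$, which does not depend on $\theta$, under standard regularity I would interchange $\nabla_{\theta}$ with the expectation. The integrand depends on $\theta$ only through the scalar $R_{\theta}$, so writing $\ell(R):=h'(R)R-h(R)-h'(R^{-1})$ it suffices to compute $\tfrac{d\ell}{dR}$ and set $G_h(R_\theta):=\tfrac{d\ell}{dR}\big|_{R=R_\theta}$, whence $\nabla_\theta\mathcal{L}^h_{\textrm{BPO}}=\mathbb{E}[G_h(R_\theta)\nabla_\theta R_\theta]$ follows immediately, establishing the factored form \cref{eq:grad}.

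First I would differentiate $\ell$ term by term. The product rule gives $\tfrac{d}{dR}[h'(R)R]=h''(R)R+h'(R)$; the second term contributes $-h'(R)$; and for the third term, the chain rule together with $\tfrac{d}{dR}R^{-1}=-R^{-2}$ yields $\tfrac{d}{dR}[-h'(R^{-1})]=R^{-2}h''(R^{-1})$. The crucial step is that the $h'(R)$ produced by the first term cancels exactly against the $-h'(R)$ from the second, leaving the remarkably clean expression
\[
G_h(R_\theta)=R_\theta\,h''(R_\theta)+R_\theta^{-2}\,h''\!\left(R_\theta^{-1}\right).
\]

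For the positivity claim I would invoke two facts. By definition \cref{eq:def_R}, $R_\theta$ is a ratio of strictly positive policy and reference probabilities raised to the power $\beta$, so $R_\theta>0$, and consequently $R_\theta^{-2}>0$ and $R_\theta^{-1}>0$ lie in the domain of $h$. Strict convexity of $h$ gives $h''>0$ throughout its domain, so both $h''(R_\theta)$ and $h''(R_\theta^{-1})$ are strictly positive. Hence $G_h(R_\theta)$ is a sum of two strictly positive quantities and is therefore strictly positive for every admissible $R_\theta$, as claimed.

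The differentiation itself is routine; the points requiring care are the gradient–expectation interchange and the fact that the twice-continuous-differentiability assumption is exactly what guarantees $h''(R_\theta^{-1})$ is well defined at the reflected argument $R_\theta^{-1}$. The genuinely illuminating—and in my view the only nonobvious—step is the cancellation of the $h'(R_\theta)$ terms: it is precisely this cancellation that converts an expression whose sign could \emph{a priori} depend on the choice of $h$ into a manifestly positive, $h$-independent sum, which is the heart of the argument.
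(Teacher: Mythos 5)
Your proposal is correct and follows essentially the same route as the paper: differentiate the integrand $h'(R)R-h(R)-h'(R^{-1})$ with respect to $R$, observe the cancellation of the $h'(R)$ terms to get $G_h(R_\theta)=h''(R_\theta)R_\theta+R_\theta^{-2}h''(R_\theta^{-1})$, and conclude positivity from $h''>0$ and $R_\theta>0$. The paper states this $G_h$ directly without showing the intermediate differentiation, so your write-up is simply a more explicit version of the same argument.
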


See \Cref{sec:A.4} for the proof. \cref{eq:grad} shows that the gradient direction at a point $(\rvx,\rvy_w, \rvy_l)$ depends only on $\nabla_\theta R_{\theta}$, regardless of the choice of $h$. In contrast, $h$ controls the point-wise gradient magnitude $G_{h}(R_{\theta})$, which determines the relative weighting of each sample during optimization. The weighting can influence the gradient direction aggregated over a mini-batch as shown in \Cref{fig:3.c}. \Cref{prop2} further explains that gradient descent updates decrease the value of $R_{\theta}$ at the observed point $(\rvx, \rvy_w, \rvy_l) \sim p_{\text{data}}$. This is intuitive since the observed pair is sampled from the distribution in the denominator of the target ratio $R_{\text{data}} = \frac{p_{\text{data}}(\rvy_w \prec \rvy_l \mid \rvx)}{p_{\text{data}}(\rvy_w \succ \rvy_l \mid \rvx)}$. \Cref{fig:3.a,fig:3.b} show that $G_{h}(R_{\theta})$ remains positive across various choices of $h$. Although \Cref{thma} shows that the theoretical optimality holds regardless of the choice of the function $h$, the gradient magnitude $G_{h}(R_{\theta})$ varies with $h$, making the choice of $h$ important due to potential sub-optimality in practical optimization.

\begin{figure*}
    \begin{minipage}[h]{0.345\textwidth}
             \begin{subfigure}{1.0\textwidth}
                 \includegraphics[width=1.85in, height=1.4in]{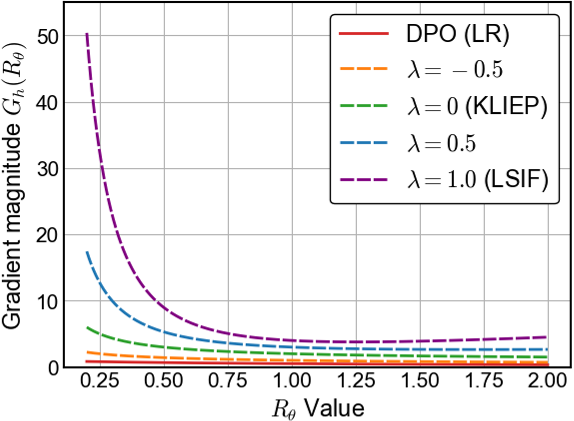}
                 \caption{$G_{h}$ of BA($\lambda$) divergence.}
                 \label{fig:3.a}
             \end{subfigure}
        \end{minipage}
    \begin{minipage}[h]{0.345\textwidth}
             \begin{subfigure}{1.0\textwidth}
                 \includegraphics[width=1.85in, height=1.4in]{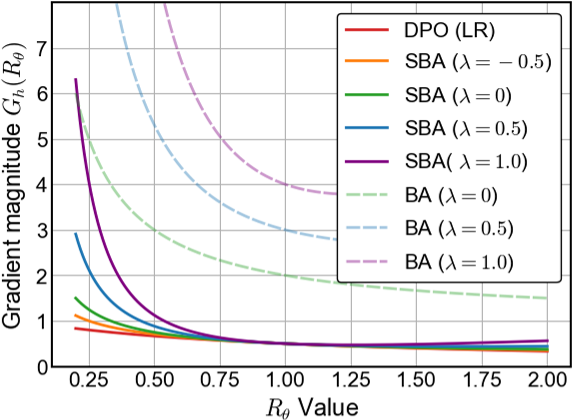}
                 \caption{$G_{h}$ of SBA($\lambda$) divergence.}
                 \label{fig:3.b}
             \end{subfigure}
        \end{minipage}
        \begin{minipage}[h]{0.3\textwidth}
             \begin{subfigure}{1.0\textwidth}
             \centering
                 \includegraphics[width=1.5in, height=1.4in]{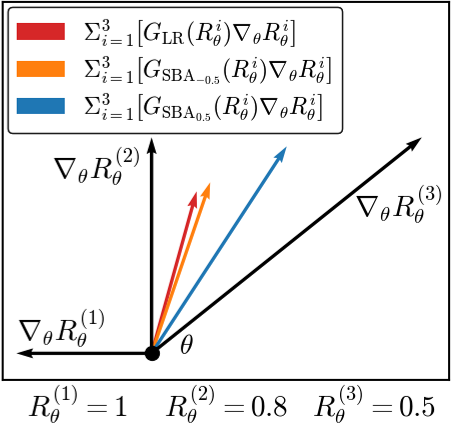}
                 \caption{Gradient updates.}
                 \label{fig:3.c}
             \end{subfigure}
        \end{minipage}
        \caption{Gradient magnitude and direction analysis across different Bregman divergences.}
            \label{fig:3}
            \vspace{-5.5mm}
\end{figure*}

\subsection{Instances of Bregman preference optimization}
\label{sec:4.3}
\Cref{tab:main2} summarizes well-known instances of Bregman divergences unified in~\cite{sugiyama2012density} and their corresponding BPO objectives $\mathcal{L}^{h}_{\text{BPO}}$ determined by the choice of $h$.

\textbf{BPO recovers DPO as a special case:} The proposed $\mathcal{L}^{h}_{\text{BPO}}$ recovers the original DPO objective $\mathcal{L}_{\text{DPO}}$ when $h$ corresponds to the logistic regression, denoted as $\mathcal{L}^{\text{LR}}_{\text{BPO}}$ (See \Cref{sec:A.5} for more details):
\begin{align}
  &\mathcal{L}^{\text{LR}}_{\textrm{BPO}}(R_{\theta};p_{\textrm{data}}) =\mathbb{E}_{p_{\textrm{data}}(\rvy_w \succ \rvy_l | \rvx)}\left[\log{(1+R_{\theta})}\right] \nonumber \\
  &= \mathbb{E}_{p_{\textrm{data}}(\rvy_w \succ \rvy_l | \rvx)}\left[\log{\left(1+\left[\frac{\pi_{\theta}(\rvy_l|\rvx)\pi_{\text{ref}}(\rvy_w|\rvx)}{\pi_{\theta}(\rvy_w|\rvx)\pi_{\text{ref}}(\rvy_l|\rvx)}\right]^{\beta}\right)}\right] \nonumber\\
  &=\mathbb{E}_{p_{\textrm{data}}(\rvy_w \succ \rvy_l|\rvx)}\left[-\log{\sigma\left(\beta\log{\frac{\pi_{\theta}(\rvy_w|\rvx)}{\pi_{\textrm{ref}}(\rvy_w|\rvx)}} - \beta\log{\frac{\pi_{\theta}(\rvy_l|\rvx)}{\pi_{\textrm{ref}}(\rvy_l|\rvx)}}\right)}\right]  = \mathcal{L_{\textrm{DPO}}}(\pi_{\theta};\pi_{\textrm{ref}},p_{\textrm{data}}).\nonumber
\end{align}
\textbf{KLIEP and LSIF under Basu's power divergence (BA):}
 The instances, $\mathcal{L}^{\text{KLIEP}}_{\text{BPO}}$ and $\mathcal{L}^{\text{LSIF}}_{\text{BPO}}$, are unified under $\mathcal{L}^{\text{BA}_{\lambda}}_{\text{BPO}}$, where the gradient magnitude is given by:
\begin{equation}
    G_{\text{BA}_{\lambda}}(R_{\theta})=(\lambda+1)(R_{\theta}^{\lambda}+R_{\theta}^{-\lambda-1}),
\end{equation}
as shown in \Cref{fig:3.a} for various values of $\lambda$. While the term $(R_{\theta}^{\lambda}+R_{\theta}^{-\lambda-1})$ provides meaningful control over the optimization behavior with respect to $R_{\theta}$ via the hyperparameter $\lambda$, the coefficient $(\lambda+1)$ unnecessarily scales up the gradient magnitude without introducing any $\theta$-dependent learning signal. Compared to the gradient magnitude of DPO, given by $G_{\text{LR}}(R_{\theta})=\frac{1}{1+R_{\theta}}$, the gradient magnitude of $G_{\text{BA}_{\lambda}}(R_{\theta})$ increases significantly as $\lambda$ becomes larger. When the gradient scale varies, training requires careful adjustment of the learning rate, batch size, and optimizer. Since preference optimization is sensitive to hyperparameters, improper tuning can severely degrade performance. To mitigate this issue, we propose a simple scaled version of BA to match DPO's gradient scale.

\textbf{Scaled Basu's power divergence (SBA):}
We propose $h(R) = \frac{R^{1+\lambda} - R}{s\lambda(\lambda + 1)}$, which corresponds to scaling the $h$-function of BA by a factor of $s(\lambda + 1)$, where $s$ is a scaling constant. This scaling directly affects the gradient magnitude as follows:
\begin{equation}
G_{\text{SBA}_{\lambda}}(R_{\theta})=(R_{\theta}^{\lambda}+R_{\theta}^{-\lambda-1})/s.
\end{equation}
$G_{\text{SBA}_\lambda}(R_\theta)$ eliminates the unnecessary $\lambda$-dependent amplification, resulting in a more reasonable gradient scale in \Cref{fig:3.b}. Since preference optimization typically initializes $\pi_\theta$ as $\pi_{\text{ref}}$, the value of $R_\theta$ is 1 for all input points $(\rvx, \rvy_w, \rvy_l)$ at the start of training. By setting $s = 4$, the gradient magnitude of $G_{\text{SBA}_{\lambda}}$ matches that of $G_{\text{LR}}$ at initialization ($R_\theta = 1$). The hyperparameter $\lambda$ allows the model to control whether to prioritize updates for more confident (e.g., $R_\theta\ll 1$ ) or less confident (e.g., $R_\theta \approx 1$) samples. Recent studies~\cite{wu2024beta,kim2025spread} have shown that DPO is sensitive to data quality and that applying confidence-based adjustments to individual samples can be heuristically effective. SBA controls the sensitivity to confident samples by tuning $\lambda$, while preserving theoretical optimality.

\subsection{Compatibility with other DPO extensions}
\label{sec:4.4}
We have discussed how the optimal policy defined by DPO can be approximated by the policy model. Our generalization can also be applied orthogonally to DPO loss variants that define different optimal policies. We consider $f$-DPO~\cite{wang2024beyond} as an example by defining a \textit{model ratio} as:
\begin{equation}
R_{\theta}^{f\text{-DPO}}(\rvx,\rvy_w,\rvy_l):=\text{exp}\left[-\beta f'\left(\frac{\pi_{\theta}(\rvy_w|\rvx)}{\pi_{\text{ref}}(\rvy_w|\rvx)}\right) +\beta f'\left(\frac{\pi_{\theta}(\rvy_l|\rvx)}{\pi_{\text{ref}}(\rvy_l|\rvx)}\right)\right], \label{eq:f-dpo-ratio}
\end{equation}
where $f: \mathbb{R}^{+} \rightarrow \mathbb{R}$ denotes a convex function satisfying $f(1) = 0$. Substituting $R_{\theta}^{f\text{-DPO}}$ and the $h$ function corresponding to logistic regression into BPO; $\mathcal{L}^{\text{LR}}_{\text{BPO}}(R_{\theta}^{f\text{-DPO}}; p_{\text{data}})$ recovers the original objective of $f$-DPO~\cite{wang2024beyond} (See \Cref{sec:A.6} for details). BPO enables constructing new objectives by varying the choice of $h$ based on the model ratio $R_{\theta}^{f\text{-DPO}}$, which results in $\mathcal{L}^{h}_{\text{BPO}}(R_{\theta}^{f\text{-DPO}}; p_{\text{data}})$. Similarly, existing logistic regression-based DPO variants can also be generalized by BPO.

\section{Experiments}
\label{sec:5}
This section presents the empirical performance of the proposed BPO compared to prior preference optimization methods. \Cref{sec:5.1} compares instances of BPO with other probabilistic loss extensions. \Cref{sec:5.2} compares BPO to the state-of-the-art DPO loss variants on popular LLM benchmarks.

\subsection{Comparison with probabilistic DPO loss extensions}
\label{sec:5.1}
We compare BPO with baseline methods for probabilistic DPO loss extensions, and analyze the effect of different choices of the function $h$ within the BPO framework.

\textbf{Task \& experimental setup:}
The experiments are conducted for single-turn dialogue generation using the Anthropic helpful and harmless (HH) dataset~\cite{bai2022training}, and summarization using the Reddit TL;DR dataset~\cite{volske2017tl}. While the experimental setups vary across the different baseline papers, we faithfully follow the setup provided in the original DPO paper~\cite{rafailov2023direct} for a fair comparison. For dialogue generation, we use Pythia-2.8B~\cite{biderman2023pythia} as the pre-trained LLM and perform SFT on the preferred subset of the HH dataset. For summarization, we use a publicly available SFT model~\cite{tldr-sft} based on GPT-J~\cite{gpt-j}. All comparisons are conducted on the same SFT model with identical training hyperparameters ($\beta$, learning rate, batch size). See \Cref{sec:C} for details.

\begin{table}[!t]
    \centering
    \vspace{-6mm}
    \caption{Comparison of dialogue generation performance on Anthropic-HH, using the same SFT model based on Pythia-2.8B. Each value is colored \textcolor{BrickRed}{red} if it is better than DPO, and \textcolor{MidnightBlue}{blue} if it performs worse. The best result for each metric is shown in \textbf{bold}, and the second and third best are \underline{underlined}.}
    \renewcommand{\arraystretch}{0.9}
        \begin{tabular}{clccccc}
        \toprule
        \multirow{2}{*}{\textbf{Type}}&\multirow{2}{*}{\textbf{Loss}}& \multicolumn{2}{c}{\textbf{Win rate (\%) $\uparrow$}} &  \multicolumn{3}{c}{\textbf{Diversity}}  \\
        \cmidrule(lr){3-4} \cmidrule(lr){5-7}
        & &  vs Preferred & vs SFT & Entropy $\uparrow$ & BLEU $\downarrow$   &  Distinct-1 $\uparrow$  \\
         \midrule
        - & SFT  & 33.5 &- & 3.508& 0.096  & 0.375\\
        \midrule
        \midrule
        All & DPO  & 48.5& 71.5& 2.801 & 0.145  & \underline{0.336}\\
         \midrule 
         \multirow{3}{*}{\rotatebox[origin=c]{90}{f-DPO~\cite{wang2024beyond}}} & FKL  &34.5$^{\textcolor{MidnightBlue}{-28.9\%}}$ &55.5$^{\textcolor{MidnightBlue}{-22.4\%}}$& \textbf{3.354}$^{\textcolor{BrickRed}{+19.7\%}}$&\textbf{0.088}$^{\textcolor{BrickRed}{+39.3\%}}$ & \underline{0.338}$^{\textcolor{BrickRed}{+0.7\%}}$\\
         & $\alpha$-div.  &47.5$^{\textcolor{MidnightBlue}{-2.1\%}}$ &64.0$^{\textcolor{MidnightBlue}{-10.5\%}}$& \underline{3.125}$^{\textcolor{BrickRed}{+11.6\%}}$&\underline{0.123}$^{\textcolor{BrickRed}{+14.9\%}}$ & 0.332$^{\textcolor{MidnightBlue}{-1.2\%}}$\\
         & JS  &52.0$^{\textcolor{BrickRed}{+7.2\%}}$ &70.0$^{\textcolor{MidnightBlue}{-2.1\%}}$& 2.723$^{\textcolor{MidnightBlue}{-2.8\%}}$&0.139$^{\textcolor{BrickRed}{+3.8\%}}$ & 0.299$^{\textcolor{MidnightBlue}{-10.9\%}}$\\
         & $\chi^2$~\cite{huang2025correcting} &\underline{53.5}$^{\textcolor{BrickRed}{+10.3\%}}$ &72.0$^{\textcolor{BrickRed}{+0.7\%}}$& 2.369$^{\textcolor{MidnightBlue}{-15.4\%}}$&0.134$^{\textcolor{BrickRed}{+7.3\%}}$ & 0.279$^{\textcolor{MidnightBlue}{-16.8\%}}$\\
          \midrule
         \multirow{4}{*}{\rotatebox[origin=c]{90}{f-PO~\cite{xu2025fpo}}} & Jeffrey  &51.5$^{\textcolor{BrickRed}{+6.2\%}}$ &68.0$^{\textcolor{MidnightBlue}{-4.9\%}}$& 2.436$^{\textcolor{MidnightBlue}{-13.0\%}}$&0.166$^{\textcolor{MidnightBlue}{-14.7\%}}$ & 0.271$^{\textcolor{MidnightBlue}{-19.1\%}}$\\
         & JS  &\underline{54.5}$^{\textcolor{BrickRed}{+12.4\%}}$ &\underline{76.0}$^{\textcolor{BrickRed}{+6.3\%}}$& 2.531$^{\textcolor{MidnightBlue}{-9.6\%}}$&0.154$^{\textcolor{MidnightBlue}{-6.1\%}}$ & 0.283$^{\textcolor{MidnightBlue}{-15.8\%}}$\\
         & $\alpha$-div.  &52.0$^{\textcolor{BrickRed}{+7.2\%}}$ &\underline{75.5}$^{\textcolor{BrickRed}{+5.6\%}}$& 2.298$^{\textcolor{MidnightBlue}{-18.0\%}}$&0.168$^{\textcolor{MidnightBlue}{-15.9\%}}$ & 0.290$^{\textcolor{MidnightBlue}{-13.6\%}}$\\
         & RKL~\cite{ji2024towards}  &49.5$^{\textcolor{BrickRed}{+2.1\%}}$ &70.5$^{\textcolor{MidnightBlue}{-1.4\%}}$& 2.321$^{\textcolor{MidnightBlue}{-17.1\%}}$&0.182$^{\textcolor{MidnightBlue}{-25.8\%}}$ & 0.268$^{\textcolor{MidnightBlue}{-20.2\%}}$\\
          \midrule 
           \multirow{4}{*}{\rotatebox[origin=c]{90}{BPO}}  & BA & 51.0$^{\textcolor{BrickRed}{+5.2\%}}$& 75.5$^{\textcolor{BrickRed}{+5.6\%}}$& 2.803$^{\textcolor{BrickRed}{+0.1\%}}$&0.141$^{\textcolor{BrickRed}{+2.6\%}}$ & 0.320$^{\textcolor{MidnightBlue}{-4.5\%}}$\\ 
           & KELIP & 48.5$^{\textcolor{BrickRed}{+0.0\%}}$& 71.5$^{\textcolor{BrickRed}{+0.0\%}}$ & 2.901$^{\textcolor{BrickRed}{+3.6\%}}$&0.151$^{\textcolor{MidnightBlue}{-4.1\%}}$& 0.332$^{\textcolor{MidnightBlue}{-1.0\%}}$\\
& LSIF &50.5$^{\textcolor{BrickRed}{+4.1\%}}$ & 72.5$^{\textcolor{BrickRed}{+1.4\%}}$& 2.908$^{\textcolor{BrickRed}{+3.8\%}}$&0.139$^{\textcolor{BrickRed}{+4.1\%}}$ & 0.321$^{\textcolor{MidnightBlue}{-4.5\%}}$\\
           \rowcolor{gray!25} \cellcolor{gray!0}& SBA &\textbf{57.0}$^{\textcolor{BrickRed}{+17.5\%}}$ & \textbf{77.0$^{\textcolor{BrickRed}{+7.7\%}}$}& \underline{3.010}$^{\textcolor{BrickRed}{+7.5\%}}$&\underline{0.132}$^{\textcolor{BrickRed}{+9.0\%}}$  & \textbf{0.340}$^{\textcolor{BrickRed}{+1.4\%}}$\\
        \bottomrule
    \end{tabular}
    \renewcommand{\arraystretch}{1.0}
    \label{tab:main3}
\end{table}

\begin{table}[!t]
 \vspace{-3mm}
    \begin{minipage}[h]{0.68\textwidth}
    \centering
    \small
    \caption{The performance of $\mathcal{L}^{h}_{\text{BPO}}(R_{\theta}^{f\text{-DPO}}; p_{\text{data}})$, which extends \\$f$-DPO~\cite{wang2024beyond} orthogonally using our BPO framework.}
     \adjustbox{max width=\textwidth}{%
       \begin{tabular}{ccccccc}
        \toprule
        \multirow{2}{*}[-0.5\dimexpr \aboverulesep + \belowrulesep + \cmidrulewidth]{ \shortstack[l]{$f(\cdot)$}}&\multirow{2}{*}[-0.5\dimexpr \aboverulesep + \belowrulesep + \cmidrulewidth]{$h(\cdot)$} & \multicolumn{2}{c}{\textbf{Win rate (\%) $\uparrow$}} &  \multicolumn{3}{c}{\textbf{Diversity}}  \\
        \cmidrule(lr){3-4} \cmidrule(lr){5-7}
         & & vs Preferred & vs SFT & Entropy $\uparrow$ & BLEU $\downarrow$   &  Distinct-1 $\uparrow$  \\
         \midrule
          \multirow{2}{*}[-0\dimexpr \aboverulesep + \belowrulesep + \cmidrulewidth]{ \shortstack[c]{FKL}} 
           & LR &34.5 &55.5& \textbf{3.354}&0.088 & 0.338\\
           & \cellcolor{gray!25} SBA & \cellcolor{gray!25}\textbf{39.5} & \cellcolor{gray!25}\textbf{60.0}& \cellcolor{gray!25} 3.237& \cellcolor{gray!25}\textbf{0.081} & \cellcolor{gray!25} \textbf{0.340}\\
           \midrule
           \midrule
          \multirow{2}{*}[-0\dimexpr \aboverulesep + \belowrulesep + \cmidrulewidth]{ \shortstack[c]{JS}} & LR &52.0 &70.0& 2.723&0.139 & 0.299\\
          & \cellcolor{gray!25} SBA & \cellcolor{gray!25}\textbf{55.0} & \cellcolor{gray!25}\textbf{73.5}& \cellcolor{gray!25}\textbf{2.856}& \cellcolor{gray!25}\textbf{0.127} & \cellcolor{gray!25}\textbf{0.307}\\
        \bottomrule
    \end{tabular}
    \label{tab:main4}
    }
    \end{minipage}
    \begin{minipage}[t]{0.3\textwidth}
    \setlength{\abovecaptionskip}{1pt}
    \centering
    \raisebox{-11mm}{\includegraphics[width=1.5in, height=1.11in]{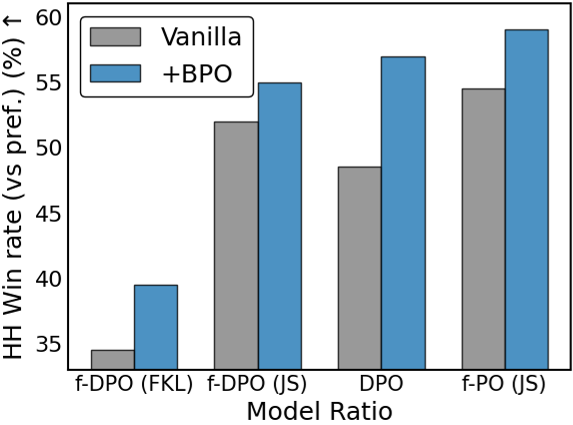}}
    \captionof{figure}{Orthogonal improvements with BPO (SBA).}
    \label{fig:4}
    \end{minipage}
    \vspace{-5mm}
\end{table}

\textbf{Baselines:}
We compare with representative baselines, including instances of $f$-DPO~\cite{wang2024beyond} and $f$-PO~\cite{xu2025fpo}. For $f$-DPO, we consider forward KL (FKL), Jensen-Shannon (JS), $\alpha$-divergence with $\alpha = 0.1, 0.3, 0.5,$ and $0.7$, selecting $\alpha = 0.1$ based on the best win rate, and additionally include $\chi^2$-divergence~\cite{huang2025correcting}. For $f$-PO, we consider Jeffrey, JS, reverse KL (RKL)~\cite{ji2024towards}, and $\alpha$-divergence with $\alpha = 0.1$, as suggested in the original paper. To ensure a fair computational comparison, we adopt the pairwise approximation proposed in $f$-PO.

\textbf{Evaluation metrics:} 
Both fidelity and diversity of the generated samples are important from a probability matching perspective. We evaluated the models using test datasets. To measure fidelity, we use the win rate against both the preferred responses and the SFT model responses, as judged by GPT-4. To assess diversity, we use predictive entropy~\cite{zengtoken,wang2024beyond}, self-BLEU~\cite{zhu2018texygen}, and distinct-1~\cite{li2016diversity}.

\subsubsection{Dialogue generation task}
\begin{figure*}[t]
    \begin{minipage}[h]{0.33\textwidth}
             \begin{subfigure}{1.0\textwidth}
                 \includegraphics[width=1.8in, height=1.33in]{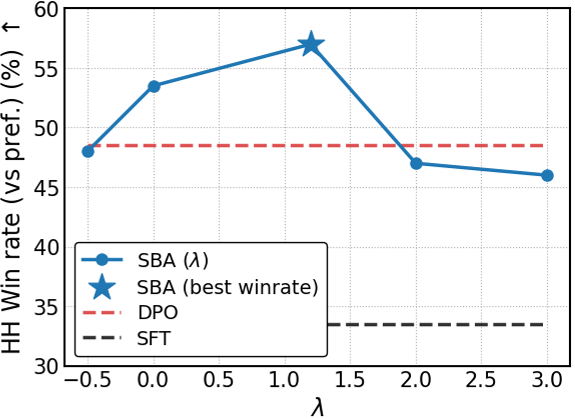}
                 \caption{Win rate (vs preferred).}
                 \label{fig:5.a}
             \end{subfigure}
        \end{minipage}
    \begin{minipage}[h]{0.33\textwidth}
             \begin{subfigure}{1.0\textwidth}
                 \includegraphics[width=1.8in, height=1.33in]{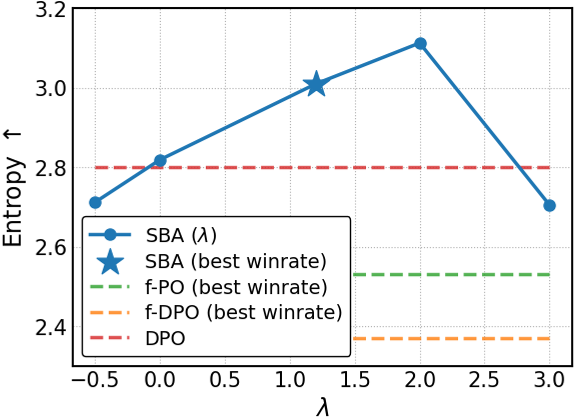}
                 \caption{Entropy.}
                 \label{fig:5.b}
             \end{subfigure}
        \end{minipage}
        \begin{minipage}[h]{0.33\textwidth}
             \begin{subfigure}{1.0\textwidth}
                 \includegraphics[width=1.8in, height=1.33in]{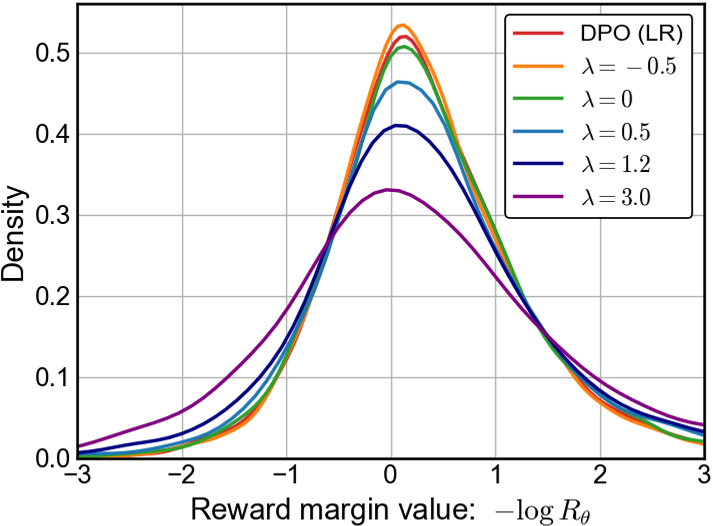}
                 \caption{Reward margin statistics.}
                 \label{fig:5.c}
             \end{subfigure}
        \end{minipage}
        \caption{Ablation studies on the effect of $\lambda$ in SBA, using Anthropic-HH with Pythia-2.8B.}
            \label{fig:5}
            \vspace{-0mm}
\end{figure*}

\textbf{Comparison to baselines:} \Cref{fig:1} shows the trade-off between win rate and entropy for each instance, and \Cref{tab:main3} summarizes the results across all metrics. $f$-DPO recovers DPO when RKL is used as the regularizer. As the $\alpha$-divergence shifts from RKL to FKL, the win rate decreases while diversity improves, reflecting the mode-covering nature of FKL. $\chi^2$ improves win rate, as observed in $\chi^2$-PO~\cite{huang2006correcting}. $f$-PO recovers DPO with FKL. Other divergences, such as Jeffrey, JS, $\alpha$-divergence, and RKL, have weaker mode coverage than FKL, resulting in worse diversity across all metrics.

All BPO instances achieve win rates at least as high as those of DPO against both preferred responses and SFT responses, while also achieving better entropy. $f$-DPO and $f$-PO exhibit substantial degradations in some cases, such as a 28.9\% win rate drop in $f$-DPO (FKL) and a 25.8\% BLEU drop in $f$-PO (RKL). In contrast, instances of BPO show no degradation greater than 5\% on any metric. Notably, SBA is the only instance that consistently outperforms DPO on all metrics, achieving the highest win rate compared to all other losses.

\textbf{Orthogonal utilization with other methods:} As discussed in \Cref{sec:4.4}, existing loss functions that can be expressed as logistic regression can be orthogonally extended using BPO. \Cref{tab:main4} reports the performance of BPO applied to two instances of $f$-DPO (FKL and JS), showing improvements in 9 out of 10 metrics. As shown in \Cref{sec:B.2}, $f$-PO with pairwise datasets can also be viewed as logistic regression for some $f$, and applying SBA to $f$-PO (JS) results in further performance gains. \Cref{fig:4} summarizes the performance improvements achieved by applying SBA to each model ratio.

\textbf{Ablation studies on $\lambda$ in SBA:} \Cref{fig:5} shows ablation studies on the effect of $\lambda$ in the proposed SBA, an instance of the BPO framework. When $\lambda = -0.5$, the gradient behavior closely resembles that of DPO, as shown in \Cref{fig:3.b}, with both win rate and entropy showing similar trends in \Cref{fig:5.a,fig:5.b}. As $\lambda$ increases, both metrics improve up to a certain point, in contrast to the instances of $f$-PO and $f$-DPO that exhibit a clear trade-off between win rate and entropy. Larger values of $\lambda$ lead the model to focus more on confident samples (where $R_\theta$ deviates further from 1), resulting in a wider spread in reward margin ($-\log R_\theta$) statistics, as shown in \Cref{fig:5.c}. This shift in the statistics of reward margins appears to be an underlying factor contributing to the performance improvements.

\subsubsection{Summarization task}
For the TL;DR summarization task, we compare the best win rate instances of each framework reported in \Cref{tab:main3}, where $f$-DPO uses $\chi^2$, $f$-PO uses JS, and BPO uses SBA. \Cref{fig:6} presents the win rate against preferred responses across different sampling temperatures, showing a trend consistent with prior studies that lower temperatures lead to better performance. BPO consistently outperforms the other instances at all temperatures. \Cref{tab:main5} shows additional metrics at temperature 0.25, where BPO shows the best performance across all metrics.

\subsection{Comparison with general DPO loss variants}\label{sec:5.2}

This section extends the analysis to a larger model and evaluates its performance on external benchmarks beyond the training domain, comparing it with general DPO variants.

\textbf{Experimental setup:}
We conduct experiments using Mistral-7B-Base~\cite{jiang2023mistral7b}, Llama-3-8B-Base, and Llama-3-8B-Instruct~\cite{grattafiori2024llama} backbone models, based on the UltraFeedback dataset~\cite{cui2024ultrafeedback}. For Mistral-7B-Base, we use publicly available SFT models from Zephyr~\cite{tunstall2023zephyr,zephyr-7b-sft-full}. We extend DPO with our BPO loss based on the Zephyr setup and extend SimPO with our BPO loss following the SimPO~\cite{meng2024simpo} setup.
For Llama-3-8B-Base and Llama-3-8B-Instruct, we use the same SFT models~\cite{Meta-Llama-3-8B-Instruct} as in the SimPO setup, along with the adapted version of the UltraFeedback dataset~\cite{llama3-ultrafeedback-armorm}. See \Cref{sec:C} for details.

\textbf{Evaluation benchmark:}
We evaluate the response capability across a wide range of queries using the most popular open-ended instruction-following benchmarks. AlpacaEval2~\cite{alpaca_eval} measures the win rate on 805 examples, using GPT-4 Turbo as both the judge and the opponent. The evaluation covers both raw and length-controlled responses. Arena-Hard~\cite{li2024crowdsourced} measures the win rate on 500 queries, using GPT-4 Turbo as the judge and GPT-4-0314 as the opponent. All evaluations follow the library configurations and decoding parameters used in SimPO to ensure fair comparisons.

\begin{table}[t]
 \vspace{-0mm}
    \begin{minipage}[h]{0.45\textwidth}
    \centering
    \caption{Performance on  TL;DR summarization based on GPT-J with 0.25 temperature.}
     \adjustbox{max width=\textwidth}{%
       \begin{tabular}{lcccc}
        \toprule
        \multirow{2}{*}[-0.52\dimexpr \aboverulesep + \belowrulesep + \cmidrulewidth]{Type} & \multicolumn{2}{c}{\textbf{Win rate (\%) $\uparrow$}} &  \multicolumn{2}{c}{\textbf{Diversity}}  \\
        \cmidrule(lr){2-3} \cmidrule(lr){4-5}
          & vs Preferred & vs SFT & BLEU $\downarrow$ & Entropy $\uparrow$   \\
         \midrule
 DPO &47.0 &63.0&0.597& 0.276 \\
            $f$-DPO  &46.0 &60.5&0.596& 0.258 \\
           $f$-PO  &52.5 &70.5&0.571& 0.302 \\
           \rowcolor{gray!25} BPO   &\textbf{61.0} &\textbf{71.0}&\textbf{0.565}& \textbf{0.318} \\
        \bottomrule
    \end{tabular}
    \label{tab:main5}
    }
    \end{minipage}
    \begin{minipage}[t]{0.32\textwidth}
    \setlength{\abovecaptionskip}{1pt}
    \centering
    \raisebox{-13mm}{\includegraphics[width=0.99\textwidth]{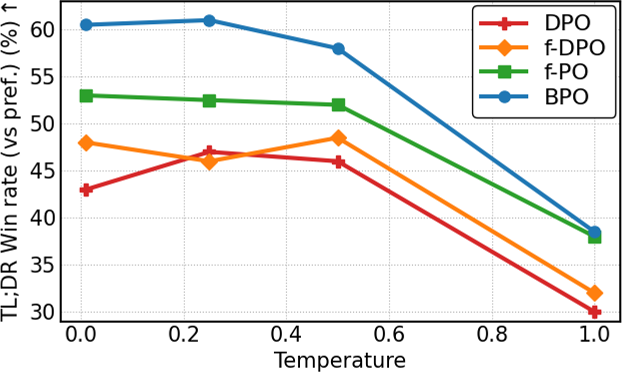}}
    \captionof{figure}{TL;DR ablation.}
    \label{fig:6}
    \end{minipage}
    \begin{minipage}[t]{0.21\textwidth}
    \setlength{\abovecaptionskip}{1pt}
    \centering
    \raisebox{-13mm}{\includegraphics[width=0.99\textwidth]{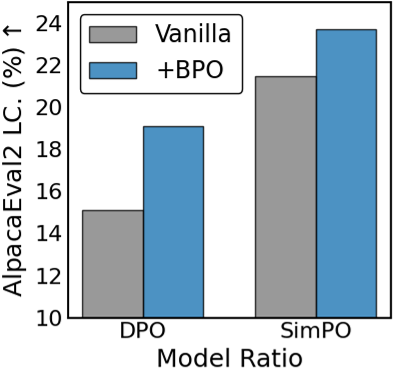}}
    \captionof{figure}{Mistral-7B.}
    \label{fig:7}
    \end{minipage}
    \vspace{-5mm}
\end{table}

\setlength{\tabcolsep}{5.0pt}
\begin{table}
    \centering
    \vspace{1mm}
    \caption{Performance on AlpacaEval2 and Arena-Hard with various backbone models. \textbf{LC} (\%) indicates the length-controlled win rate, and \textbf{WR} (\%) denotes the raw win rate. The baseline results are from SimPO~\cite{meng2024simpo}.}
    \renewcommand{\arraystretch}{0.99}
    \vspace{-0mm}
        \small
        \begin{tabular}{lccccccccc}
        \toprule
           & \multicolumn{3}{c}{\textbf{ Mistral-7B-Base}} & \multicolumn{3}{c}{\textbf{Llama-3-8B-Base}}& \multicolumn{3}{c}{\textbf{Llama-3-8B-Instruct}}  \\
            \cmidrule(lr){2-4} \cmidrule(lr){5-7} \cmidrule(lr){8-10}
        \multirow{2}{*}{Method} & \multicolumn{2}{c}{\textbf{AlpacaEval}} & \textbf{Arena-Hard} & \multicolumn{2}{c}{\textbf{AlpacaEval}} & \textbf{Arena-Hard}& \multicolumn{2}{c}{\textbf{AlpacaEval}} & \textbf{Arena-Hard} \\
        \cmidrule(lr){2-4} \cmidrule(lr){5-7} \cmidrule(lr){8-10} 
         &  \textbf{LC} & \textbf{WR}  &  \textbf{WR} &  \textbf{LC}  & \textbf{WR} &  \textbf{WR} &  \textbf{LC}  & \textbf{WR} &  \textbf{WR}  \\
        \midrule
        SFT & 8.4 & 6.2 & 1.3 & 6.2 & 4.6 & 3.3& 26.0 & 25.3 & 22.3\\
        \midrule
        RRHF~\cite{yuan2023rrhf}&  11.6&  10.2&5.8 &  12.1&  10.1&6.3&  37.9&  31.6&28.8\\
       SLiC-HF~\cite{zhao2023slic}&  10.9& 8.9 &7.3&  12.3&  13.7&6.0&  33.9& 32.5 &29.3 \\
        DPO~\cite{rafailov2023direct}&  15.1&  12.5& 10.4&  18.2&  15.5&15.9&  48.2&  47.5& 35.2\\
        IPO~\cite{azar2024general}&  11.8&  9.4& 7.5&  14.4&  14.2&17.8& 46.8&  42.4& 36.6\\
        CPO~\cite{pmlr-v235-xu24t}&  9.8&  8.9& 6.9&  10.8&  8.1&5.8&  34.1&  36.4& 30.9\\
        KTO~\cite{pmlr-v235-ethayarajh24a}&  13.1& 9.1 & 5.6&  14.2&  12.4&12.5&   34.1& 32.1 & 27.3\\
        ORPO~\cite{hong2024orpo}& 14.7 &  12.2&7.0&  12.2&  10.6&10.8&  38.1&  33.8&28.2 \\
        R-DPO~\cite{park2024disentangling}& 17.4 & 12.8 & 8.0&  17.6&  14.4&17.2&  48.0 & 45.8 & 35.1\\
        SimPO~\cite{meng2024simpo} & 21.5 & 20.8 & 16.6&  22.0&  \textbf{20.3}&23.4&  53.7 & 47.5 & 36.5 \\
         \rowcolor{gray!25} BPO & \textbf{23.7} & \textbf{20.9} & \textbf{16.9}&  \textbf{22.5}&  18.7&\textbf{31.7}&  \textbf{55.9} & \textbf{51.5} & \textbf{38.0} \\
       \bottomrule
    \end{tabular}
    \label{tab:main6}
    \vspace{-0mm}
\end{table}

\textbf{Results:} \Cref{tab:main6} presents the main results obtained with various backbone models. Baseline results are taken from SimPO~\cite{meng2024simpo}. Most DPO variants exhibit even worse performance than the standard DPO. BPO can also be applied on top of SimPO, as discussed in \Cref{sec:4.4}, and it achieves consistent performance gains over SimPO, except in one out of nine cases. To the best of our knowledge, BPO achieves state-of-the-art \textbf{LC} performance among preference-optimized models based on Llama-3-8B-Instruct. \Cref{fig:7} shows results with Mistral-7B-Base. Specifically, BPO improves \textbf{LC} from 15.1\% to 19.1\% when built on DPO, and from 21.5\% to 23.7\% when built on SimPO. These results demonstrate that BPO generalizes well to external benchmarks beyond the training domain with the 7–8B LLM scale, showing consistent improvements across different backbones and model ratio formulations, including DPO and SimPO.




\section{Related work}
\vspace{-1mm}
A series of studies have proposed variants of the DPO loss. SLiC~\cite{zhao2023slic} and IPO~\cite{azar2024general} replace the logistic regression in DPO with hinge and squared losses, respectively, while GPO~\cite{tang2024generalized} unifies these variants within a binary classification framework. KTO \cite{pmlr-v235-ethayarajh24a} generalizes binary preference comparisons to multiwise settings. SPPO \cite{wu2025selfplay} formulates preference optimization as a two-player game between policies, framing the optimal policy as a Nash equilibrium and offering a game-theoretic interpretation of DPO. SimPO \cite{meng2024simpo} and ORPO \cite{hong2024orpo} proposed reference-free preference optimization, enabling more efficient and offline-compatible implementations. TDPO \cite{zengtoken} applies the preference loss at the token level, enabling fine-grained reward assignment and improving alignment with human preferences. $\beta$-DPO \cite{wu2024beta} dynamically adjusts the regularization parameter $\beta$ based on the reward margin between preference pairs, improving training stability and reducing sensitivity. While these studies explored complementary directions to improve the flexibility, efficiency, and robustness of preference optimization, they offer limited probabilistic interpretation and rely primarily on heuristics.

\textbf{DPO loss generalization in distribution matching perspective:} $f$-DPO~\cite{wang2024beyond} generalizes the reverse KL regularization in $\mathcal{L}_{\text{RLHF}}$ (see \cref{eq:rlhf}) to the broader class of $f$-divergences~\cite{csiszar2004information,liese2006divergences}, and deriving direct optimization objectives for a subset of $f$-divergences that satisfy specific conditions. $\chi^2$-PO~\cite{huang2025correcting} extends the direct optimization objective for $\chi^2$ divergence regularization within the $f$-DPO framework. MPO~\cite{alfano2024meta} generalizes the regularization under the Bregman divergence, particularly for meta-learning. However, modifying the regularization alone is potentially insufficient to determine the overall behavior of the loss function and naturally leads to a different optimal policy, as we discussed in \Cref{sec:B.1}. EXO~\cite{ji2024towards} reformulates the entire DPO objective as a distribution matching problem, showing that $\mathcal{L}_{\text{DPO}}$ in \cref{eq:dpo} reduces to the forward KL divergence between the policy model $\pi_{\theta}(\rvy|\rvx)$ and the optimal policy $\pi_{\theta^*}(\rvy|\rvx)$ in \cref{eq:opti_rlhf}. EXO further proposes a reverse KL loss, and $f$-PO~\cite{xu2025fpo} subsequently extends this formulation to $f$-divergences. In contrast to $f$-DPO, which yields a different optimal policy, $f$-PO is analogous to extensions of traditional generative modeling~\cite{nowozin2016f,song2021train} that directly match a policy to a target optimal policy. However, the $f$-PO loss requires a learned reward model $r_{\phi^*}$ and Monte Carlo estimation of partition functions to compute the probability of the optimal policy $\pi_{\theta^*}$. Simplifying $f$-PO for a paired preference dataset results in a different notion of target optimality as we discussed in \Cref{sec:B.2}. As summarized in \Cref{tab:main1}, the proposed BPO is distinguished from existing probabilistic loss extensions by maintaining both optimality and simplicity, while allowing flexible choices of optimization objectives.

\textbf{Generative modeling and likelihood ratio estimation:} 
The likelihood ratio has played a central role in research on generative modeling.
For continuous random variables, noise contrastive estimation (NCE)~\cite{gutmann2012noise} estimates the ratio between a known noise distribution and the target distribution to approximate the target density. This ratio identifies the target density and also serves as a learning signal for neural samplers, inspiring subsequent work on GANs~\cite{goodfellow2014generative}. The noise distribution in NCE has been replaced by the model distributions of GANs~\cite{azadi2018discriminator, che2020your}, VAEs~\cite{makhzani2015adversarial, aneja2021contrastive}, and diffusion models~\cite{pmlr-v202-kim23i, na2024diffusion}, and has recently been employed as a refinement technique in generative modeling. For discrete variables, concrete score matching (CSM)~\cite{meng2022concrete} estimates the ratio of probability masses between different states of a probability distribution. CSM has served as a foundation for the development of discrete diffusion modeling~\cite{lou2024discrete, zhang2025target}, and this paper adopts the concept of CSM to autoregressive language models. Another concurrent work has utilized CSM for knowledge distillation in autoregressive language models~\cite{kim2025distillation}.

\section{Conclusion}
\label{sec:6}
\vspace{-1mm}
We have introduced Bregman preference optimization (BPO), a generalized preference optimization objective based on Bregman divergence, formulated from the perspective of likelihood ratio estimation. BPO uniquely retains both optimality and simplicity among existing probabilistic loss extensions. Our gradient analysis further reveals that the optimization behavior depends on the choice of $h$, which determines the prioritization of samples, and we propose the gradient scaling method for BPO instances to facilitate training. Experimental results show that BPO consistently outperforms existing DPO variants across various scenarios, while being orthogonally applicable to other DPO variants. 

In addition, recent advances in discrete diffusion models benefit from the likelihood ratio perspective~\cite{lou2024discrete}. We demonstrate that a similar extension is also effective for preference optimization using autoregressive language models. Although the types of probabilistic models and the datasets differ (i.e., unconditional generation vs. preference optimization), our results suggest a potential link through a shared probabilistic foundation toward a unified understanding of generative models.

\textbf{Limitations and broader impact} Despite promising experimental results on autoregressive language models,  extending BPO to preference optimization of multi-modal LLMs~\cite{wang2024mdpo} or diffusion models~\cite{wallace2024diffusion} remains an open direction for future work. As with other LLM research, advances in LLMs may improve user assistance in various tasks, but they also pose potential concerns about ethics and bias.

\section*{Acknowledgment}

This work was supported by the IITP (Institute of Information \& Communications Technology Planning \& Evaluation)-ITRC (Information Technology Research Center) grant funded by the Korea government (Ministry of Science and ICT) (IITP-2025-RS-2024-00437268).

\newpage
\bibliography{main}
\bibliographystyle{plain}


\newpage
\appendix
\section{Proofs and derivations}
\label{sec:A}

In this section, we provide detailed proofs for the theorems presented in the main text. Throughout this paper, we have the following assumption:
\begin{assumption*}
    $p_{\text{data}}(\rvy_w \succ \rvy_l \mid \rvx), \pi_{\text{ref}}(\rvy \mid \rvx), \pi_{\theta^*}(\rvy \mid \rvx) > 0$ for all $\rvy_w, \rvy_l, \rvy, \rvx $ . 
\end{assumption*}
\begin{assumption*}
    Function $h$ is strictly convex and twice continuously differentiable. 
\end{assumption*}
These non-negative assumptions ensure that our target quantities, including likelihood ratios, are well-defined. The non-negative assumption for $p_{\text{data}}$ is naturally aligned with the Bradley-Terry model, which utilizes a sigmoid function to model pairwise probabilities. Given that this model inherently assumes a non-zero probability for each possible data pair, our assumption is a direct adaptation of this framework. We also assume that $\pi_{\text{ref}}$ remains strictly positive across its domain. This is reasonable, as we often construct $\rvy_w$ and $\rvy_l$ as being sampled from the reference policy distribution, where each possible outcome has a non-zero likelihood. Finally, the optimal policy $\pi_{\theta^*}$ is generally formulated as a reference policy with an exponential reward term, which implies that $\pi_{\theta^*}$ is non-zero whenever $\pi_{\text{ref}}$ is non-zero, making this assumption a natural extension of the positive support for the reference distribution. 

According to the definition of Bregman divergence~\cite{bregman1967relaxation}, the function $h$ is required to be strictly convex and continuously differentiable. Since we optimize over this divergence, an additional condition on its second derivative is needed. This condition is required in previous works~\cite{pmlr-v202-kim23i} that use Bregman divergence as a loss. The well-known Bregman divergences listed in \Cref{tab:main2} satisfy the condition.

In the following subsections, we provide the proof of each statement.
 
\subsection{Proof of \texorpdfstring{\cref{prop}}{Proposition 1}}
\label{sec:A.1}
\propositiona*
\begin{proof}
From the Bradley-Terry model we discussed in \cref{eq:opti_dpo}, the preference data distribution $p_{\text{data}}(\rvy_w\succ \rvy_l |\rvx)$ can be expressed as
\[
p_{\text{data}}(\rvy_w \succ \rvy_l \mid\rvx) = 
\left(1 + \exp\left( \beta \log \frac{\pi_{\theta^*}(\rvy_l \mid\rvx)}{\pi_{\text{ref}}(\rvy_l \mid\rvx)} - \beta \log \frac{\pi_{\theta^*}(\rvy_w\mid\rvx)}{\pi_{\text{ref}}(\rvy_w\mid\rvx)} \right)\right)^{-1}.
\]
Taking the reciprocal of this expression, we obtain
\[
\frac{1}{p_{\text{data}}(\rvy_w\succ \rvy_l \mid\rvx)} = 1 + \exp\left( \beta \log \frac{\pi_{\theta^*}(\rvy_l \mid\rvx)}{\pi_{\text{ref}}(\rvy_l \mid\rvx)} - \beta \log \frac{\pi_{\theta^*}(\rvy_w\mid\rvx)}{\pi_{\text{ref}}(\rvy_w\mid\rvx)} \right).
\]
From this, it follows that
\begin{align}
\frac{p_{\text{data}}(\rvy_w \prec \rvy_l\mid\rvx)}{p_{\text{data}}(\rvy_w\succ \rvy_l \mid\rvx)} = \frac{1 - p_{\text{data}}(\rvy_w\succ \rvy_l \mid\rvx)}{p_{\text{data}}(\rvy_w\succ \rvy_l \mid\rvx)} & = \exp\left( \beta \log \frac{\pi_{\theta^*}(\rvy_l \mid\rvx)}{\pi_{\text{ref}}(\rvy_l \mid\rvx)} - \beta \log \frac{\pi_{\theta^*}(\rvy_w\mid\rvx)}{\pi_{\text{ref}}(\rvy_w\mid\rvx)} \right)  \label{eq:prop1_init}\\
& = \exp\left( \beta \log \frac{\pi_{\theta^*}(\rvy_l \mid\rvx)\pi_{\text{ref}}(\rvy_w\mid\rvx)}{\pi_{\text{ref}}(\rvy_l \mid\rvx)\pi_{\theta^*}(\rvy_w\mid\rvx)} \right)  \\
& = \left( \frac{\pi_{\theta^*}(\rvy_l \mid\rvx)\pi_{\text{ref}}(\rvy_w\mid\rvx)}{\pi_{\text{ref}}(\rvy_l \mid\rvx)\pi_{\theta^*}(\rvy_w\mid\rvx)} \right)^{\beta}. 
\end{align}

Raising both sides to the power of $1/\beta$ gives the following:
\begin{align}
\left( \frac{p_{\text{data}}(\rvy_w \prec \rvy_l\mid\rvx)}{p_{\text{data}}(\rvy_w\succ \rvy_l \mid\rvx)} \right)^{{1/\beta}} = 
\frac{\pi_{\theta^*}(\rvy_l \mid\rvx) \pi_{\text{ref}}(\rvy_w\mid\rvx)}{\pi_{\text{ref}}(\rvy_l \mid\rvx) \pi_{\theta^*}(\rvy_w\mid\rvx)}.
\end{align}
Finally, rearranging this expression gives the desired result:
\begin{align}
\frac{\pi_{\theta^*}(\rvy_w\mid\rvx)}{\pi_{\theta^*}(\rvy_l \mid\rvx)} = 
\frac{\pi_{\text{ref}}(\rvy_w\mid\rvx)}{\pi_{\text{ref}}(\rvy_l \mid\rvx)} \times \left( \frac{p_{\text{data}}(\rvy_w\succ \rvy_l \mid\rvx)}{p_{\text{data}}(\rvy_w \prec \rvy_l\mid\rvx)} \right)^{{1/\beta}} \label{eq:prop1_end}.
\end{align}
\end{proof}

\subsection{Proof of \texorpdfstring{\cref{thma}}{Theorem 2}}
\label{sec:A.2}
\thma*
\begin{proof}
Let $\argmin_{\pi_{\theta}}D_{h}(R_{\text{data}}||R_{\theta}) = \pi_{\hat{\theta}}$. We have followings from \cref{eq:bregman,eq:breg_ours}:
\begin{align}
D_{h}\big(R_{\text{data}}(\rvx,\rvy_w,\rvy_l)||&R_{\theta}(\rvx,\rvy_w,\rvy_l)\big) =\mathbb{E}_{p_{\text{data}}(\rvy_w \succ \rvy_l |\rvx)}[B_{h}\big(R_{\text{data}}(\rvx,\rvy_w,\rvy_l)||R_{\theta}(\rvx,\rvy_w,\rvy_l)\big)]. \nonumber 
\end{align}
From the property of Bregman divergence~\cite{bregman1967relaxation}, $B_{h}\left(R_{\text{data}}(\rvx,\rvy_w,\rvy_l)||R_{\theta}(\rvx,\rvy_w,\rvy_l)\right)$ achieves its minimum value of 0 if and only if $R_{\text{data}}(\rvx,\rvy_w,\rvy_l)=R_{\theta}(\rvx,\rvy_w,\rvy_l)$ for a given point $(\rvx,\rvy_w,\rvy_l)$. Under the assumption that the data distribution is fully supported, the minimizer $\pi_{\hat{\theta}}$ is guaranteed to satisfy $R_{\text{data}} = R_{\hat{\theta}}$ for all $(\rvx, \rvy_w, \rvy_l)$. From the definition of $R_{\text{data}}$ and $R_{\theta}$ in \cref{eq:def_R}, we obtain the following:
\[
\frac{p_{\text{data}}(\rvy_w\prec \rvy_l |\rvx)}{p_{\text{data}}(\rvy_w \succ\rvy_l |\rvx)}=\left[\frac{\pi_{\hat{\theta}}(\rvy_l|\rvx)\pi_{\text{ref}}(\rvy_w|\rvx)}{\pi_{\hat{\theta}}(\rvy_w|\rvx)\pi_{\text{ref}}(\rvy_l|\rvx)}\right]^{\beta} \Leftrightarrow \frac{\pi_{\hat{\theta}}(\rvy_w \mid\rvx)}{\pi_{\hat{\theta}}(\rvy_l \mid\rvx)} = 
\frac{\pi_{\text{ref}}(\rvy_w \mid\rvx)}{\pi_{\text{ref}}(\rvy_l \mid\rvx)} \cdot
\left( 
\frac{p_{\text{data}}(\rvy_w\succ \rvy_l |\rvx)}{p_{\text{data}}(\rvy_w \prec\rvy_l |\rvx)}
\right)^{1/\beta}.
\]
From the definition of $\pi^*$ in \Cref{prop}, we obtain the following relation, which holds for all points $(\rvx, \rvy_w, \rvy_l)$.
\[\frac{\pi_{\hat{\theta}}(\rvy_w \mid\rvx)}{\pi_{\hat{\theta}}(\rvy_l \mid\rvx)} =\frac{\pi_{\theta^*}(\rvy_w \mid\rvx)}{\pi_{\theta^*}(\rvy_l \mid\rvx)}.
\]
Using the completeness property of concrete score~\cite{meng2022concrete}, we conclude that $\pi_{\hat{\theta}}=\pi_{\theta^*}$.
\end{proof}

\subsection{Proof of \texorpdfstring{\cref{thmb}}{Theorem 3}}
\label{sec:A.3}
\thmb*
\begin{proof}
From \cref{eq:breg_ours}, we obtain:
\begin{align}
  D_{h}(R_{\text{data}}&||R_{\theta}) = \mathbb{E}_{p_{\textrm{data}}(\rvy_w\succ \rvy_l |\rvx)}\left[h(R_{\textrm{data}})-h(R_{\theta}) - h'(R_{\theta})(R_{\textrm{data}} - R_{\theta})\right] \nonumber\\
    & = \mathbb{E}_{p_{\textrm{data}}(\rvy_w\succ \rvy_l |\rvx)}\left[h'(R_{\theta})R_{\theta}-h(R_{\theta}) - h'(R_{\theta})R_{\textrm{data}}\right] + \mathbb{E}_{p_{\textrm{data}}(\rvy_w\succ \rvy_l |\rvx)}\left[h(R_{\textrm{data}})\right] \nonumber\\
    & = \mathbb{E}_{p_{\textrm{data}}(\rvy_w\succ \rvy_l |\rvx)}\left[h'(R_{\theta})R_{\theta}-h(R_{\theta}) - h'(R_{\theta})\frac{p_{\text{data}}(\rvy_w\prec \rvy_l |\rvx)}{p_{\text{data}}(\rvy_w \succ\rvy_l |\rvx)}\right] - C \nonumber\\
    & = \mathbb{E}_{p_{\textrm{data}}(\rvy_w\succ \rvy_l |\rvx)}\left[h'(R_{\theta})R_{\theta}-h(R_{\theta})\right] -\mathbb{E}_{p_{\textrm{data}}(\rvy_l \succ \rvy_w |\rvx)}[h'(R_{\theta})] - C \nonumber
\end{align}
Then, the second term $\mathbb{E}_{p_{\text{data}}(\rvy_l \succ \rvy_w |\rvx)} \left[h'(R_\theta) \right]$ can be expressed as follows:
\begin{align*}
\mathbb{E}_{p_{\text{data}}(\rvy_l \succ \rvy_w |\rvx)} \left[h'(R_\theta) \right]&=\mathbb{E}_{p_{\text{data}}(\rvy_l \succ \rvy_w |\rvx)} \left[h'(R_\theta(\rvx, \rvy_w, \rvy_l)) \right]\\
&= \mathbb{E}_{p_{\text{data}}(\rvy_w \succ \rvy_l \mid\rvx)} \left[h'(R_\theta(\rvx, \rvy_l, \rvy_w)) \right]\\
&= \mathbb{E}_{p_{\text{data}}(\rvy_w \succ \rvy_l \mid\rvx)} \left[h'\left(\left[ \frac{\pi_{\text{ref}}(\rvy_l \mid\rvx)\pi_\theta(\rvy_w \mid\rvx)}{\pi_{\text{ref}}(\rvy_w \mid\rvx)\pi_\theta(\rvy_l \mid\rvx)}\right]^\beta\right) \right]\\
&= \mathbb{E}_{p_{\text{data}}(\rvy_w \succ \rvy_l \mid\rvx)} \left[h'\left( \frac{1}{\left[ \frac{\pi_{\text{ref}}(\rvy_w \mid\rvx)\pi_\theta(\rvy_l \mid\rvx)}{\pi_{\text{ref}}(\rvy_l \mid\rvx)\pi_\theta(\rvy_w \mid\rvx)}\right]^\beta}\right) \right]\\
&= \mathbb{E}_{p_{\text{data}}(\rvy_w \succ \rvy_l \mid\rvx)} \left[h'\left(\frac{1}{R_\theta(\rvx, \rvy_w, \rvy_l)}\right) \right]= \mathbb{E}_{p_{\text{data}}(\rvy_w \succ \rvy_l \mid\rvx)} \left[h'\left(R_{\theta}^{-1}\right) \right].
\end{align*}
Substituting this back into the above objective, the resulting objective becomes:
\begin{align}
 D_{h}(R_{\text{data}}||R_{\theta})=&\mathbb{E}_{p_{\text{data}}(\rvy_w \succ \rvy_l \mid\rvx)} \left[h'(R_\theta) R_\theta - h(R_\theta)- h'\left(R_{\theta}^{-1}\right) \right] - C. \nonumber \\ 
& \Leftrightarrow \mathcal{L}^{h}_{\textrm{BPO}}(R_{\theta};p_{\textrm{data}})=D_{h} (R_{\text{data}}||R_{\theta})+C \nonumber
\end{align}
\end{proof}

\newpage

\subsection{Proof of \texorpdfstring{\cref{prop2}}{Proposition 4}}
\label{sec:A.4}
\propositionb*
\begin{proof}
From the definition of $\mathcal{L}^{h}_{\textrm{BPO}}(R_{\theta};p_{\textrm{data}})$, the gradient magnitude $G_h(R_{\theta})$ is given by  
\begin{align}
G_h(R_{\theta}) = h''(R_{\theta})R_{\theta} + \frac{1}{R_{\theta}^2}h''(R_{\theta}^{-1}).
\end{align}
Since $h$ is strictly convex, its second derivative $h''$ is positive. In addition, the model ratio $R_{\theta}$ is defined over positive values, so $G_h(R_{\theta})$ remains positive for all $R_{\theta}$.
\end{proof}

\subsection{BPO recovers DPO as a special case}
\label{sec:A.5}
This section shows that BPO includes DPO as a special case as discussed in \Cref{sec:4.3}. Consider $h(R)=\frac{R \log{R} - (1+R) \log{(1+R)}}{2}$ which corresponds to the logistic regression in \Cref{tab:main2}, which has a derivative function as $h'(R)=\frac{\log{R} -\log{(1+R)}}{2}$ and the expression inside the expectation operator of $\mathcal{L}^{h}_{\textrm{BPO}}(R_{\theta};p_{\textrm{data}})$ in \cref{eq:BPO} becomes:
\begin{align}
h'&(R_{\theta})R_{\theta} - h(R_{\theta})-  h'\left(R_{\theta}^{-1}\right)  \label{eq:cover_begin}\\
=& \frac{R_{\theta}\log{R_{\theta}} - R_{\theta}\log{(1+R_{\theta})}}{2} -  \frac{R_{\theta} \log{R_{\theta}} - (1+R_{\theta}) \log{(1+R_{\theta})}}{2} - \frac{\log{\frac{1}{R_{\theta}}} -\log{(1+\frac{1}{R_{\theta}})}}{2} \nonumber \\
=&\log{(1+R_{\theta})}  \label{eq:cover_end}
\end{align}
And by applying this $h$ to $\mathcal{L}^{h}_{\textrm{BPO}}(R_{\theta};p_{\textrm{data}})$ result in $\mathcal{L_{\textrm{DPO}}}(\pi_{\theta};\pi_{\textrm{ref}},p_{\textrm{data}})$ as follows:
\begin{align}
  \mathcal{L}^{\text{LR}}_{\textrm{BPO}}(R_{\theta};p_{\textrm{data}}) &=\mathbb{E}_{p_{\textrm{data}}(\rvy_w \succ \rvy_l | \rvx)}\left[\log{(1+R_{\theta})}\right] \nonumber \\
  &= \mathbb{E}_{p_{\textrm{data}}(\rvy_w \succ \rvy_l | \rvx)}\left[\log{\left(1+\left[\frac{\pi_{\theta}(\rvy_l|\rvx)\pi_{\text{ref}}(\rvy_w|\rvx)}{\pi_{\theta}(\rvy_w|\rvx)\pi_{\text{ref}}(\rvy_l|\rvx)}\right]^{\beta}\right)}\right] \nonumber\\
  &=\mathbb{E}_{p_{\textrm{data}}(\rvy_w \succ \rvy_l|\rvx)}\left[-\log{\sigma\left(\beta\log{\frac{\pi_{\theta}(\rvy_w|\rvx)}{\pi_{\textrm{ref}}(\rvy_w|\rvx)}} - \beta\log{\frac{\pi_{\theta}(\rvy_l|\rvx)}{\pi_{\textrm{ref}}(\rvy_l|\rvx)}}\right)}\right] \nonumber\\
  & = \mathcal{L_{\textrm{DPO}}}(\pi_{\theta};\pi_{\textrm{ref}},p_{\textrm{data}})\nonumber
\end{align}

\subsection{\texorpdfstring{BPO with $R_{\theta}^{f\text{-DPO}}$ recovers $f$-DPO as a special case}{xx}}
\label{sec:A.6}
Consider $h$ corresponding to logistic regression, as discussed in \Cref{sec:A.5}. We also consider the model ratio $R^{\text{$f$-DPO}}$, introduced in \Cref{sec:4.4}. Applying \cref{eq:cover_begin,eq:cover_end} to  $\mathcal{L}^{h}_{\textrm{BPO}}(R^{\text{$f$-DPO}}_{\theta};p_{\textrm{data}})$ yields $\mathcal{L_{\textrm{$f$-DPO}}}(\pi_{\theta};\pi_{\textrm{ref}},p_{\textrm{data}})$ in \cref{eq:f-dpo} as follows:
\begin{align}
  \mathcal{L}^{\text{LR}}_{\textrm{BPO}}(R_{\theta}^{f\text{-DPO}};p_{\textrm{data}}) &=\mathbb{E}_{p_{\textrm{data}}(\rvy_w \succ \rvy_l | \rvx)}\left[\log{(1+R_{\theta}^{f\text{-DPO}})}\right] \nonumber \\
  &= \mathbb{E}_{p_{\textrm{data}}(\rvy_w \succ \rvy_l | \rvx)}\left[\log{\left(1+\text{exp}\left[-\beta f'\left(\frac{\pi_{\theta}(\rvy_w|\rvx)}{\pi_{\text{ref}}(\rvy_w|\rvx)}\right) +\beta f'\left(\frac{\pi_{\theta}(\rvy_l|\rvx)}{\pi_{\text{ref}}(\rvy_l|\rvx)}\right)\right]\right)}\right] \nonumber\\
  &= \mathbb{E}_{p_{\textrm{data}}(\rvy_w \succ \rvy_l|\rvx)}\left[-\log \sigma \left( \beta f'\left(\frac{\pi_\theta(\rvy_w|\rvx)}{\pi_{\mathrm{ref}}(\rvy_w|\rvx)} \right) 
- \beta f'\left(\frac{\pi_\theta(\rvy_l|\rvx)}{\pi_{\mathrm{ref}}(\rvy_l|\rvx)} \right) \right)\right] \nonumber\\
  & = \mathcal{L_{\textrm{$f$-DPO}}}(\pi_{\theta};\pi_{\textrm{ref}},p_{\textrm{data}})\nonumber
\end{align}

\newpage

\section{Formal discussions of related work}
\label{sec:B}
\subsection{\texorpdfstring{$f$-DPO~\cite{wang2024beyond}}{f-DPO}}
\label{sec:B.1}
The related work $f$-DPO extends the RLHF formulation in \cref{eq:rlhf} to $f$-divergence regularization:
\begin{align}
& \mathcal{L}_{\textrm{$f$-RLHF}}(\pi_{\theta};\pi_{\textrm{ref}},r_{\phi^{*}}) = - \mathbb{E}_{\pi_{\theta}(\rvy|\rvx)}[r_{\phi^{*}}(\rvx,\rvy)] + \beta D_{f}(\pi_{\theta}(\rvy|\rvx)||\pi_{\textrm{ref}}(\rvy|\rvx)), \nonumber
\end{align}
where $f: \mathbb{R}^{+} \rightarrow \mathbb{R}$ is a convex function satisfying $f(1) = 0$. This work further derives the corresponding direct preference optimization objective from $\mathcal{L_{\textrm{$f$-RLHF}}}$ as follows:
\begin{align}
\mathcal{L_{\textrm{$f$-DPO}}}(\pi_{\theta};\pi_{\textrm{ref}},p_{\textrm{data}}) = - \mathbb{E}_{p_{\textrm{data}}(\rvy_w \succ \rvy_l|\rvx)}\left[\log \sigma \left( \beta f'\left(\frac{\pi_\theta(\rvy_w|\rvx)}{\pi_{\mathrm{ref}}(\rvy_w|\rvx)} \right) 
- \beta f'\left(\frac{\pi_\theta(\rvy_l|\rvx)}{\pi_{\mathrm{ref}}(\rvy_l|\rvx)} \right) \right)\right],\label{eq:f-dpo}
\end{align}
with the additional requirement that the derivative $f'$ is invertible and its domain excludes zero. Reverse KL that recovers DPO, forward KL, $\alpha$-divergence with $\alpha \in (0,1)$, and JS divergence all satisfy this condition.

\textbf{Optimality:} The optimal policy $\pi^{\textrm{$f$}}_{\theta^*}$ that minimizes $\mathcal{L}_{\textrm{$f$-RLHF}}$ or $\mathcal{L_{\textrm{$f$-DPO}}}$ is given by
\[
\pi^{\textrm{$f$}}_{\theta^*}(\rvy|\rvx) = \frac{1}{Z_{f}(\rvx)}\pi_{\text{ref}}(\rvy|\rvx)(f')^{-1}\left(\frac{1}{\beta}r_{\phi^*}(\rvx,\rvy)\right),
\]
where $Z_f(\rvx)=\sum_{\rvy}\pi_{\text{ref}}(\rvy|\rvx)(f')^{-1}\left(\frac{1}{\beta}r_{\phi^*}(\rvx,\rvy)\right)$ is the partition function. Importantly, $\pi^{\textrm{$f$}}_{\theta^*}$ varies with the choice of $f$, indicating that this extension does not preserve the optimality of the original DPO. This is in contrast to BPO, which preserves the optimal solution as DPO for any general choice of $h$, as proved in \Cref{thma}.

\subsection{\texorpdfstring{$f$-PO~\cite{xu2025fpo}}{f-PO}}
\label{sec:B.2}
\textbf{Formulation that preserves optimality:} The traditional generative modeling studies~\cite{nowozin2016f,yu2020training} formulate to minimize $f$-divergence between model distribution $\pi_\theta$ and target distribution  $\pi_\theta^*$ as $D_{f}(\pi_\theta||\pi_{\theta^*})$, as described in the third row of \Cref{tab:main1}. The $f$-PO slightly re-defines the model and target distribution as:
\[
\hat{\pi}_\theta(\rvy|\rvx) \propto \pi_{\theta}(\rvy|\rvx)^{\beta}\pi_{\text{ref}}(\rvy|\rvx)^{1-\beta}, \quad\quad \pi^{*}(\rvy|\rvx) \propto \pi_{\text{ref}}(\rvy|\rvx)\text{exp}(r_{\phi^*}(\rvx,\rvy)),
\]
and formulate the matching problem as $D_f(\hat{\pi}_{\theta}||\hat{\pi}^{*})$, which still guarantees $\pi_{\theta}$ converges at $\pi_{\theta}^*$ at the optimum. We denote this property of the objective as $\textbf{O}$ in \Cref{tab:main1}. The $f$-divergence objective becomes:
\[
D_f(\hat{\pi}_{\theta}||\hat{\pi}^{*}) = \mathbb{E}_{\pi_{\text{ref}}}\left[\pi_{r_{\phi^*}}f\left(\frac{\tilde{\pi}_{\theta}}{\pi_{r_{\phi^*}}}\right)\right], \quad \text{where } \pi_{r_{\phi^*}}=\frac{\text{exp}(r_{\phi^*}(\rvx,\rvy))}{Z_{r_{\phi*}}(\rvx)}, \quad \tilde{\pi}_{\theta}=\frac{\left[\frac{\pi_{\theta}(\rvy|\rvx)}{\pi_{\text{ref}}(\rvy|\rvx)}\right]^{\beta}}{\tilde{Z}_{\theta}(\rvx)},
\]
with partition functions $Z_{r_{\phi*}}$, $\tilde{Z}_{\theta}$. This formulation does not preserve the simplicity of the original DPO because it requires (1) the learned reward model $r_{\phi^*}$, and (2) Monte Carlo estimates on the partition functions. 

\textbf{Simplified $f$-PO:} In the case of using pairwise preference datasets, an approximate version was also proposed to remove the additional computational burden mentioned above. If we perform Monte Carlo (MC) estimation using only two samples and set the positive reward function value to $1$ and the negative to $0$, the objective becomes as follows (the original paper also presents a formulation that includes label smoothing, but we omit it for simplicity):
\begin{align}
\mathcal{L}_{f\textrm{-PO}}(\pi_{\theta};\pi_{\textrm{ref}},p_{\textrm{data}})=\mathbb{E}_{p_{\textrm{data}}(\rvy_w \succ \rvy_l|\rvx)}\left[ f\left( \sigma \left(\beta \log \frac{\pi_\theta(\rvy_w|\rvx)}{\pi_{\mathrm{ref}}(\rvy_w|\rvx)} 
- \beta \log \frac{\pi_\theta(\rvy_l|\rvx)}{\pi_{\textrm{ref}}(\rvy_l|\rvx)} \right)\right)\right]. \label{eq:sim_fpo}
\end{align}
The optimality theory of $f$-PO relies on the assumptions of infinite Monte Carlo estimates and access to a ground-truth reward model, but the paper does not provide a proof that optimality is guaranteed under this simplified formulation. We present an example using the Jeffrey divergence $f(u)=(u-1)\log{u}$ to show that optimality can vary depending on the choice of $f$. Let $M_{\theta}=\sigma \left(\beta \log \frac{\pi_\theta(\rvy_w|\rvx)}{\pi_{\mathrm{ref}}(\rvy_w|\rvx)} 
- \beta \log \frac{\pi_\theta(\rvy_l|\rvx)}{\pi_{\textrm{ref}}(\rvy_l|\rvx)} \right)$, then $\mathcal{L}_{f\textrm{-PO}}$ with Jeffery divergence becomes:
\begin{align}
\mathbb{E}_{p_{\textrm{data}}(\rvy_w \succ \rvy_l|\rvx)} \left[ (M_{\theta}-1)\log{(M_{\theta})}\right] =\mathbb{E}_{p_{\textrm{data}}(\rvy_w \succ \rvy_l|\rvx)} \left[ -\log{\left(M_{\theta}^{(1-M_{\theta})}\right)}\right].\nonumber
\end{align}
Note that the value of $M_{\theta}^{1-M_{\theta}}$ takes values between 0 and 1, meaning that it can be viewed as the model distribution for a binary random variable. From the optimality of maximum likelihood estimation, the optimal solution $M_{\theta_{\textrm{$Jeff$}}^*}$ for the equation above fulfills $p_{\textrm{data}}(\rvy_w \succ \rvy_l|\rvx)= M_{\theta_{\textrm{$Jeff$}}^*}^{\left(1-M_{\theta_{\textrm{$Jeff$}}^*}\right)},$ while the original optimal solution $\theta^*$ fulfills $p_{\textrm{data}}(\rvy_w \succ \rvy_l|\rvx)= M_{\theta^*}$. The conditions for $M_{\theta^*}=M_{\theta_{\textrm{$Jeff$}}^*}$ requires $M_{\theta^*}$ to have a value of 0 or 1 for all given points $(\rvx,\rvy_w,\rvy_l)$. Therefore, the loss extension of simplified $f$-PO does not preserve the original optimality in general cases.

\textbf{Orthogonal extension of BPO to simplified $f$-PO:}
As discussed in \Cref{sec:4.4}, losses that can be expressed as logistic regression can be extended using BPO. 
Similarly, some instances of simplified $f$-PO can also be applied within the BPO framework by interpreting $f$-PO as logistic regression. The objective $\mathcal{L}_{\textrm{$f$-PO}}$ in \cref{eq:sim_fpo} is equivalent to
\begin{align}
\mathbb{E}_{p_{\textrm{data}}(\rvy_w \succ \rvy_l|\rvx)}\left[ -\log \sigma\left( -\log{(\text{exp}(f(M_\theta))-1)} \right)\right], \label{eq:f-PO-logistic}
\end{align}
under the regularity condition that the value of the function $f$ is strictly positive, i.e., $f(M_{\theta})>0$ for all $M_{\theta}\in(0,1)$. The $f$ instances such as FKL: $f(u)=-\log(u)$, Jeffrey: $f(u)=(u-1)\log(u)$ and JS: $f(u)=(u-1)\log(\frac{1+u}{2})+u\log{u}$ satisfy this condition. 
We define model ratio as:
\begin{equation}
R_{\theta}^{f\text{-PO}}(\rvx,\rvy_w,\rvy_l):=\text{exp}\left(f(M_{\theta})\right)-1,
\end{equation}
and $\mathcal{L}^{\text{h}}_{\text{BPO}}(R_{\theta}^{f\text{-PO}}; p_{\text{data}})$ recovers \cref{eq:f-PO-logistic} with $h$ corresponding to the logistic regression, while allowing further generalization of the loss by choosing different forms of $h$.

\subsection{\texorpdfstring{{SimPO~\cite{meng2024simpo}}}{SimPO}}
Since SimPO is one of the model ratios used in our experiments, we provide a brief introduction in this section.
SimPO defines a preference optimization loss that normalizes the objective by response length and does not require a reference model:
\[
\mathcal{L_{\textrm{SimPO}}}(\pi_{\theta};p_{\textrm{data}}) = - \mathbb{E}_{p_{\textrm{data}}(\rvy_w \succ \rvy_l|\rvx)}\left[\log \sigma \left( \frac{\beta}{|\rvy_w|}\log{\pi_{\theta}(\rvy_w|\rvx)} 
- \frac{\beta}{|\rvy_l|}\log{\pi_{\theta}(\rvy_l|\rvx)} - \gamma \right)\right],
\]
where $|\rvy_w|$ and $|\rvy_l|$ are the response lengths, and $\gamma$ is the target reward margin. The corresponding model ratio is defined as:
\[
R_{\theta}^{\text{SimPO}}(\rvx,\rvy_w,\rvy_l):=\text{exp}\left[-\frac{\beta}{|\rvy_w|}\log{\pi_{\theta}(\rvy_w|\rvx)}+\frac{\beta}{|\rvy_l|}\log{\pi_{\theta}(\rvy_l|\rvx)}+\gamma\right],
\]
$\mathcal{L}^{h}_{\textrm{BPO}}(R_{\theta}^{\text{SimPO}};p_{\textrm{data}})$ generalizes the $\mathcal{L_{\textrm{SimPO}}}$ with the choice of $h$ similar to \Cref{sec:4.4}. Unlike the model ratios $R_{\theta}$ in \cref{eq:def_R} or $R^{f{\text{-DPO}}}_{\theta}$ in \cref{eq:f-dpo-ratio}, the value at initialization is not 1 but is close to exp$(\gamma)$. To match the gradient scale between the logistic regression loss and the SBA loss, we accordingly adjusted the scaling factor $s$ to match the gradient scale.

\subsection{\texorpdfstring{{DDRO~\cite{higuchi2025direct}}}{DDRO}}
DDRO is a contemporaneous work that formulates LLM alignment from the perspective of likelihood ratio estimation. However, it differs significantly in terms of the dataset structure, the definition of the optimal policy, and the objective formulation. In contrast to our use of a standard paired preference dataset, DDRO operates on an unpaired dataset introduced by KTO~\cite{pmlr-v235-ethayarajh24a}:
\[
\rvy_w \sim p^{w}_{\text{data}}(\rvy_w|\rvx), \quad \quad \rvy_l \sim p^{l}_{\text{data}}(\rvy_l|\rvx), 
\]
where $p^{w}_{\text{data}}$ denotes the preferred data distribution and $p^{l}_{\text{data}}$ the dispreferred one. DDRO aims to match the policy model $\pi_{\theta}(\rvy|\rvx)$ to the preferred distribution $p^{w}_{\text{data}}(\rvy_w|\rvx)$, reflecting a different notion of optimality from ours. In DDRO, although the dispreferred dataset is used in the loss computation, it does not affect the definition of the optimal policy. As shown in \Cref{prop}, our definition of the optimal policy reflects both the preferred and dispreferred distribution with point-wise human preference. The resulting objective of DDRO is defined as:
\[
\mathcal{L}_{\text{DDRO}}(\pi_{\theta};\pi_{\text{ref}},p^{w}_{\text{data}},p^{l}_{\text{data}})=\mathbb{E}_{p^{w}_{\text{data}}}\left[\log{2}-\log{\frac{\pi_{\theta}(\rvy_w|\rvx)}{\pi_{\text{ref}}(\rvy_w|\rvx)}}\right] + \mathbb{E}_{p^{l}_{\text{data}}}\left[\log{2}-\log{\left(2-\frac{\pi_{\theta}(\rvy_l|\rvx)}{\pi_{\text{ref}}(\rvy_l|\rvx)}\right)}\right].
\]
 Due to the unpaired nature of the dataset, point-wise human preference between specific response pairs cannot be captured, which may limit the granularity and precision of the supervision signal derived from human preference.

\newpage
\section{Experimental details}
\label{sec:C}
This section provides a detailed description of each experiment.

\subsection{Training algorithm}
\Cref{alg:1} describes the detailed algorithm for implementing the general BPO objective with any valid function~$h$. The main difference from the original DPO lies in \cref{alg1:loss}, where the loss is computed. Code \ref{lst:bpo} provides a PyTorch implementation of \cref{alg1:ratio,alg1:loss}.

\begin{algorithm}[h]
    \caption{Bregman preference optimization algorithm}  
    \label{alg:1}
    \SetCustomAlgoRuledWidth{\linewidth}
    \KwInput{Data distribution $p_{\text{data}}(\rvy_w \succ \rvy_l | \rvx)$, SFT model $\pi_{\text{ref}}$, batch size $b$, learning rate $\eta$, regularization coefficient $\beta$, and function $h$.}
    Initialize policy model $\pi_{\theta}$ with SFT model $\pi_{\text{ref}}$. \\
    \While{not converged}{%
    Get a batch of samples $\mathcal{S}:=\left(\rvx^{(i)},\rvy_{w}^{(i)},\rvy_{l}^{(i)}\right)_{i=1}^{b}\sim p_{\text{data}}$.\\
    Compute model ratios $\hat{R}^{(i)}_{\theta}:=R_{\theta}\left(\rvx^{(i)}, \rvy_{w}^{(i)}, \rvy_{l}^{(i)}\right)=\left[\frac{\pi_{\theta}(\rvy^{(i)}_l|\rvx^{(i)})\pi_{\text{ref}}(\rvy^{(i)}_w|\rvx^{(i)})}{\pi_{\theta}(\rvy^{(i)}_w|\rvx^{(i)})\pi_{\text{ref}}(\rvy^{(i)}_l|\rvx^{(i)})}\right]^{\beta}$ for all $i$.\label{alg1:ratio}\\
    Compute a batch loss $\mathcal{L}^{h}_{\textrm{BPO}}(R_{\theta};\mathcal{S}) = \frac{1}{b}\sum_{i=1}^{b}\left[ h'(\hat{R}^{(i)}_{\theta})\hat{R}^{(i)}_{\theta}-h(\hat{R}^{(i)}_{\theta})-h'\left(1 / \hat{R}^{(i)}_{\theta}\right)\right]$\label{alg1:loss}\\
    Compute a gradient and update the parameter: $\theta\leftarrow \theta-\eta\nabla_{\theta}{\mathcal{L}^{h}_{\text{BPO}}}(R_{\theta};\mathcal{S})$}
    \textbf{end while}\\   
    \textbf{return} $\pi_{\theta}$
\end{algorithm}
\vspace{5mm}
\begin{lstlisting}[caption={BPO loss function implementation}, label={lst:bpo}]
import torch

def BPO_loss(pi_yw_logps, pi_yl_logps, ref_yw_logps, ref_yl_logps, beta, mode, lamda):
    """ 
    Computes the BPO loss
    : param pi_yw_logps: Logprobs of the policy on y_w
    : param pi_yl_logps: Logprobs of the policy on y_l
    : param ref_yw_logps: Logprobs of the reference on y_w
    : param ref_yl_logps: Logprobs of the reference on y_l
    : param beta: Regularization coefficient
    : param mode: Selection of loss type in "DPO" or "SBA"
    : param lamda: hyperparameter for SBA 
    """
    
    ## Compute the model ratio corresponding to Line 4 of Algorithm 1.
    logits = pi_yw_logps - pi_yl_logps - ref_yw_logps + ref_yl_logps
    reward_margin = beta * logits
    R = torch.exp(-reward_margin)

    ## Compute the loss according to the function h, following Line 5 of Algorithm 1.
    if mode == "DPO":
        losses = torch.log(R + 1)
    elif mode == "SBA":
        if lamda == 0:
            losses = R + torch.log(R)
        else:
            losses = R ** (lamda + 1) - ((lamda + 1) / lamda) * (R ** (-lamda))
        losses /= 4 * (1 + lamda)

    return losses, reward_margin
\end{lstlisting}

\newpage
\subsection{Hyperparameter setups}
\textbf{Dialogue generation task:} We follow the setup described in the official DPO~\cite{rafailov2023direct} repository\footnote{\url{https://github.com/eric-mitchell/direct-preference-optimization}} for the dialogue generation task. We use Pythia-2.8B~\cite{biderman2023pythia}\footnote{\url{https://huggingface.co/EleutherAI/pythia-2.8b}} as the pre-trained LLM, and perform supervised fine-tuning (SFT) for one epoch using the 161k prompt–preferred response pairs from the training set of the Anthropic helpful and harmless (HH) dataset~\cite{bai2022training}\footnote{\url{https://huggingface.co/datasets/Anthropic/hh-rlhf}}. SFT is conducted with a batch size of 64 using the RMSprop optimizer with a learning rate of 5e-7. We use the resulting checkpoint as a reference model for all experiments reported in \Cref{fig:1}, \Cref{tab:main3}, \Cref{tab:main4}, \Cref{fig:4}, and \Cref{fig:5} in the main paper.

Preference optimization starts from the SFT model and is performed for one epoch using the training set of the HH dataset. All experiments reported in \Cref{fig:1}, \Cref{tab:main3}, \Cref{tab:main4}, \Cref{fig:4}, and \Cref{fig:5} share the same default training configuration: $\beta=0.1$, a batch size of 64, and the RMSprop optimizer with a learning rate of 5e-7. For $f$-PO~\cite{xu2025fpo}, we apply label smoothing with $\epsilon = 1 \times 10^{-3}$, as recommended in their paper, to improve empirical stability. We also observe that BPO sometimes suffers from large gradient magnitudes when the ratio value becomes too small. To address this, we clip $R_\theta$ values to be no smaller than 0.01, which improves training stability. For generations, we use a sampling temperature of 1.0, which generally yields the best win rate on the HH dataset.

\textbf{Summarization task:} We follow the same training configuration as the dialogue task using a public SFT model\footnote{\url{https://huggingface.co/CarperAI/openai_summarize_tldr_sft}}, except that we set $\beta$ to 0.5 for all methods, as specified in the DPO paper. We perform preference optimization for one epoch on the 93k training subset of the comparison version of the TL;DR summarization dataset\footnote{\url{https://huggingface.co/datasets/openai/summarize_from_feedback}}. The same setting is used for measuring the performance reported in \Cref{tab:main5} and \Cref{fig:6}. For BPO, we use SBA loss with $\lambda=-0.5$ and an $R_\theta$ clipping value of 0.025.

\textbf{External benchmarking task:} For Mistral-7B-Base~\cite{jiang2023mistral7b}\footnote{\url{https://huggingface.co/alignment-handbook/zephyr-7b-sft-full}}, we perform preference optimization for one epoch on 61k samples from the UltraFeedback\footnote{\url{https://huggingface.co/datasets/HuggingFaceH4/ultrafeedback_binarized}} dataset.
For the DPO model ratio, we use a batch size of 64 and a learning rate of 8e-7. SBA loss is applied with $\lambda = -0.5$, and $R_\theta$ is clipped to be no smaller than 0.003.
For the SimPO model ratio, we use the same setup as SimPO, applying SBA loss with $\lambda = -0.5$ and clipping $R_\theta$ to lie in the range [0.01, 30].
We interpolate between the SBA loss (0.3) and the logistic regression loss (0.7), where the resulting objective still falls under the BPO framework. For Llama-3-8B-Instruct\footnote{\url{https://huggingface.co/meta-llama/Meta-Llama-3-8B-Instruct}}, we perform preference optimization for one epoch on the modified UltraFeedback\footnote{\url{https://huggingface.co/datasets/princeton-nlp/llama3-ultrafeedback-armorm}} dataset. We use the same SimPO model ratio with $\lambda = -0.5$ and clip $R_\theta$ to the range [0.01, 100]. We interpolate between the SBA loss (0.05) and the logistic regression loss (0.95). For Llama-3-8B-Base, we use the same SimPO model ratio with $\lambda = -0.5$ and clip $R_\theta$ to the range [0.01, 30].  We interpolate between the SBA loss (0.5) and the logistic regression loss (0.5). For decoding parameters and templates, we follow the SimPO repository\footnote{\url{https://github.com/princeton-nlp/SimPO}} and ensure compatibility with the benchmark library configurations, including \texttt{alpaca-eval==0.6.2} and \texttt{vllm==0.5.4}.

\subsection{Computing resources}
We used four 46GB \textit{NVIDIA L40S} GPUs for the Pythia-2.8B experiments and four 80GB \textit{NVIDIA A100} GPUs for the other experiments, and all experiments were completed in less than 10 hours. Note that all the baselines and our method required the same training time. We used a single \textit{NVIDIA L40S} GPU for inference.

\newpage

\subsection{Evaluation metrics}
This section outlines the evaluation metrics used for the dialogue generation and summarization tasks in \Cref{sec:5.1}. Model performance is evaluated on a held-out test set, using 100 randomly sampled examples.

\textbf{Predictive entropy:}
This metric evaluates the diversity of model responses by directly reflecting the probabilities assigned to each generated token. For each prompt, we sample five responses and compute the predictive entropy using the implementation from $f$-DPO~\cite{wang2024beyond}\footnote{\url{https://github.com/alecwangcq/f-divergence-dpo/blob/main/metrics/imdb/imdb_eval_metrics.py}}.

\textbf{Self-Bleu:}
This metric measures the diversity of generated sentences. For each prompt, we generate five samples and compute the BLEU~\cite{zhu2018texygen} score for each pair of responses, then take the average. The final score is obtained by averaging over all prompts.

\textbf{Disctinct-1:}
This metric assesses the lexical diversity of a single response by computing the ratio of unique unigrams to the total number of unigrams~\cite{li2016diversity}. For each prompt, we generate one sample and calculate this value. A higher score indicates greater lexical diversity within the response.

\textbf{GPT-4 win rate:}
We adopt the prompt template proposed in DPO~\cite{rafailov2023direct}, as shown below. To mitigate potential position bias, we evaluate each pair twice using \texttt{GPT-4} as the judge model, swapping the positions of A and B.

\vspace{3mm}
\begin{tcolorbox}[title=GPT-4 win rate prompt for dialogue generation task]
For the following query to a chatbot, which response is more helpful?

\textbf{Query:} \texttt{\{dialogue\_history\}}

\textbf{Response A:} \texttt{\{response\_a\}}

\textbf{Response B:} \texttt{\{response\_b\}}

FIRST, provide a one-sentence comparison of the two responses and explain which you feel is more helpful.

SECOND, on a new line, state only ``A'' or ``B'' to indicate which response is more helpful. Your response should use the format:

\textbf{Comparison:} \texttt{<one-sentence comparison and explanation>}

\textbf{More helpful:} \texttt{<``A'' or ``B''>}
\end{tcolorbox}
\vspace{3mm}
\begin{tcolorbox}[title=GPT-4 win rate prompt for summarization task]
Which of the following summaries does a better job of summarizing the most important points in the given forum post, without including unimportant or irrelevant details?  
A good summary is both precise and concise.

\textbf{Post:} \texttt{\{post\}}

\textbf{Summary A:} \texttt{\{summary\_a\}}

\textbf{Summary B:} \texttt{\{summary\_b\}}

FIRST, provide a one-sentence comparison of the two summaries, explaining which you prefer and why.

SECOND, on a new line, state only ``A'' or ``B'' to indicate your choice. Your response should use the format:

\textbf{Comparison:} \texttt{<one-sentence comparison and explanation>}

\textbf{Preferred:} \texttt{<``A'' or ``B''>}
\end{tcolorbox}

\newpage
\section{Additional experimental results}
This section presents additional experimental results.
\subsection{\texorpdfstring{{Additional $\lambda$ ablations}}{xx}}
\Cref{fig:8} presents alternative metric results corresponding to \Cref{fig:5}, and exhibits a performance sweet spot at a similar value of $\lambda$. \Cref{tab:main7} extends \Cref{tab:main4} by showing SBA results across different $\lambda$ values for each $f$. While \Cref{tab:main4} reports results with fixed $\lambda$ values ($\lambda = 0.5$ for JS and $\lambda = 0$ for FKL), \Cref{tab:main7} provides a broader view by sweeping over multiple values. Although the optimal $\lambda$ tends to differ slightly from the DPO model ratio case, SBA often outperforms the vanilla LR baseline.

\vspace{3mm}
\begin{figure*}[h]
    \begin{minipage}[h]{0.33\textwidth}
             \begin{subfigure}{1.0\textwidth}
                 \includegraphics[width=1.8in, height=1.33in]{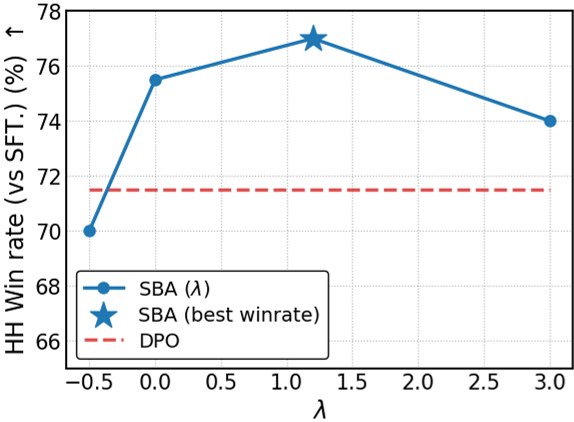}
                 \caption{Win rate (vs SFT).}
                 \label{fig:8.a}
             \end{subfigure}
        \end{minipage}
    \begin{minipage}[h]{0.33\textwidth}
             \begin{subfigure}{1.0\textwidth}
                 \includegraphics[width=1.8in, height=1.33in]{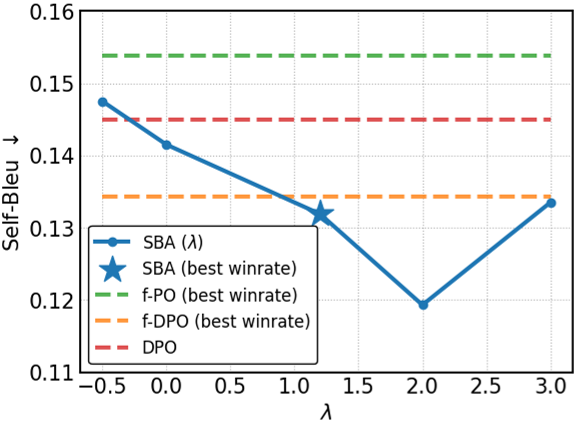}
                 \caption{Self-Bleu.}
                 \label{fig:8.b}
             \end{subfigure}
        \end{minipage}
        \begin{minipage}[h]{0.33\textwidth}
             \begin{subfigure}{1.0\textwidth}
                 \includegraphics[width=1.8in, height=1.33in]{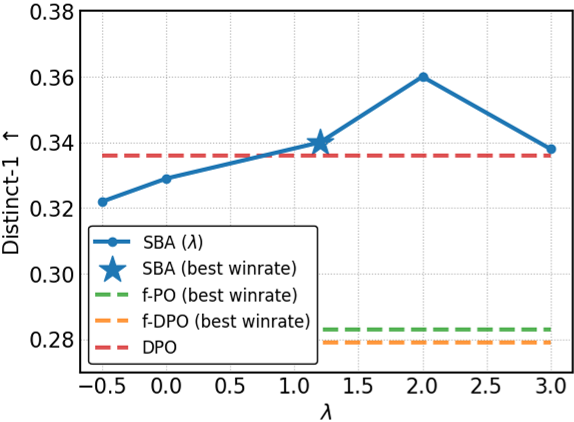}
                 \caption{Distinct-1.}
                 \label{fig:8.c}
             \end{subfigure}
        \end{minipage}
        \caption{Additional ablation studies on the effect of $\lambda$ in SBA, using Anthropic-HH with Pythia-2.8B.}
            \label{fig:8}
\end{figure*}
\vspace{3mm}

\begin{table}[h]
\centering
\caption{Additional results of $\mathcal{L}^{h}_{\text{BPO}}(R_{\theta}^{f\text{-DPO}}; p_{\text{data}})$, which extends $f$-DPO~\cite{wang2024beyond} orthogonally within our BPO framework by varying $\lambda$.}
\label{tab:main7}
\begin{tabular}{llccccc}
        \toprule
        \multirow{2}{*}[-0.5\dimexpr \aboverulesep + \belowrulesep + \cmidrulewidth]{ \shortstack[l]{$f(\cdot)$}}&\multirow{2}{*}[-0.5\dimexpr \aboverulesep + \belowrulesep + \cmidrulewidth]{$h(\cdot)$} & \multicolumn{2}{c}{\textbf{Win rate (\%) $\uparrow$}} &  \multicolumn{3}{c}{\textbf{Diversity}}  \\
        \cmidrule(lr){3-4} \cmidrule(lr){5-7}
         & & vs Preferred & vs SFT & Entropy $\uparrow$ & BLEU $\downarrow$   &  Distinct-1 $\uparrow$  \\
         \midrule
         \multirow{5}{*}[-0\dimexpr \aboverulesep + \belowrulesep + \cmidrulewidth]{ \shortstack[c]{JS}} & \cellcolor{gray!0} LR &\cellcolor{gray!0} 52.0 &\cellcolor{gray!0} 70.0& \cellcolor{gray!0} 2.723&\cellcolor{gray!0} 0.139 & \cellcolor{gray!0} 0.299\\
          & \cellcolor{gray!25} SBA $(\lambda=-0.5)$ & \cellcolor{gray!25} 54.5 & \cellcolor{gray!25} \textbf{74.0}& \cellcolor{gray!25} 2.721 & \cellcolor{gray!25} 0.141  & \cellcolor{gray!25} 0.300 \\
          & \cellcolor{gray!25} SBA $(\lambda=0.0)$ & \cellcolor{gray!25} \textbf{55.0}  & \cellcolor{gray!25} 72.5 & \cellcolor{gray!25} 2.626 & \cellcolor{gray!25} 0.136  & \cellcolor{gray!25} 0.301 \\
          & \cellcolor{gray!25} SBA $(\lambda=0.5)$ & \cellcolor{gray!25} \textbf{55.0}  & \cellcolor{gray!25} 73.5 & \cellcolor{gray!25} 2.856 & \cellcolor{gray!25} \textbf{0.127}  & \cellcolor{gray!25} 0.307 \\
          & \cellcolor{gray!25} SBA $(\lambda=1.0)$ & \cellcolor{gray!25} 48.5  & \cellcolor{gray!25} 68.5 & \cellcolor{gray!25} \textbf{2.986} & \cellcolor{gray!25} 0.134  & \cellcolor{gray!25} \textbf{0.317} \\
           \midrule
           \midrule
          \multirow{5}{*}[-0\dimexpr \aboverulesep + \belowrulesep + \cmidrulewidth]{ \shortstack[c]{FKL}} 
           & \cellcolor{gray!0} LR &\cellcolor{gray!0} 34.5 &\cellcolor{gray!0} 55.5&\cellcolor{gray!0} \textbf{3.354}&\cellcolor{gray!0} 0.088 &\cellcolor{gray!0} 0.338\\
           & \cellcolor{gray!25} SBA $(\lambda=-0.5)$ & \cellcolor{gray!25} 36.5  & \cellcolor{gray!25} 49.5 & \cellcolor{gray!25} 3.322& \cellcolor{gray!25} \textbf{0.080}  & \cellcolor{gray!25} 0.335 \\
           & \cellcolor{gray!25} SBA $(\lambda=0.0)$ & \cellcolor{gray!25} \textbf{39.5}  & \cellcolor{gray!25} \textbf{60.0} & \cellcolor{gray!25} 3.237& \cellcolor{gray!25} 0.081  & \cellcolor{gray!25} 0.340 \\
           & \cellcolor{gray!25} SBA $(\lambda= 0.5)$ & \cellcolor{gray!25} 34.0  & \cellcolor{gray!25} 58.0 & \cellcolor{gray!25} 3.283& \cellcolor{gray!25} 0.085  & \cellcolor{gray!25} 0.333 \\
           & \cellcolor{gray!25} SBA $(\lambda=1.0)$ & \cellcolor{gray!25} 31.5  & \cellcolor{gray!25} 52.5 & \cellcolor{gray!25} 3.326& \cellcolor{gray!25} 0.091  & \cellcolor{gray!25} \textbf{0.345} \\
        \bottomrule
    \end{tabular}
\end{table}

\newpage
\subsection{Performance with error bar}
This section provides error bars for analyzing statistical significance. \Cref{fig:9} presents results for the dialogue generation task in \Cref{sec:5.1}, showing win rates with standard deviations under different sampling temperatures. We also report error bars for the benchmarking results in \Cref{sec:5.2}. \Cref{tab:main8} reports the standard deviation of the win rate for AlpacaEval2 and the 95\% confidence intervals for Arena-Hard.

\begin{figure}[h]
     \centering
         \includegraphics[width=0.6\textwidth]{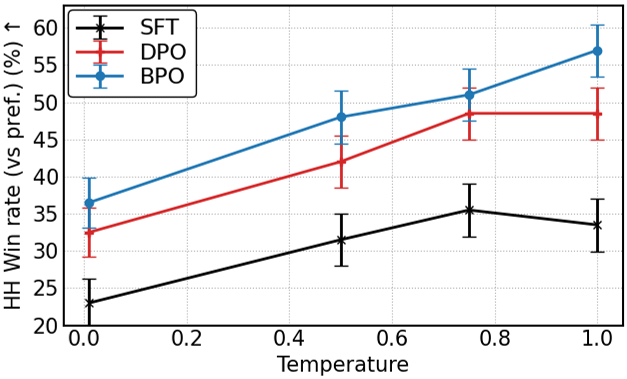}
    \caption{Temperature ablation with standard deviation.}
    \label{fig:9}
\end{figure}

\begin{table}[h]
\centering
\caption{Extended performance results on the benchmarks. Baseline results reported from SimPO~\cite{meng2024simpo}.}
\label{tab:main8}
\begin{tabular}{lccccccc}
\toprule
\multirow{2}{*}{Models}
  & \multicolumn{4}{c}{\textbf{AlpacaEval 2}}
  & \multicolumn{3}{c}{\textbf{Arena-Hard}} \\
\cmidrule(lr){2-5} \cmidrule(lr){6-8}
  & LC (\%) & WR (\%) & STD (\%) & Length
  & WR (\%) & 95 CI high & 95 CI low  \\
\midrule
\midrule
\multicolumn{8}{c}{\textit{Mistral-7B-Base}} \\
\midrule
SFT      &  8.4 &  6.2 & 1.1 &  914 &  1.3 &  1.8 &  0.9 \\
RRHF     & 11.6 & 10.2 & 0.9 & 1630 &  6.9 &  8.0 &  6.0 \\
SLIC-HF  & 10.9 &  8.9 & 0.9 & 1525 &  7.3 &  8.5 &  6.2 \\
DPO      & 15.1 & 12.5 & 1.0 & 1477 & 10.4 & 11.7 &  9.4 \\
IPO      & 11.8 &  9.4 & 0.9 & 1380 &  7.5 &  8.5 &  6.5 \\
CPO      & 9.8  &  8.9 & 0.9 & 1827 &  5.8 &  6.7 &  4.9 \\
KTO      & 13.1 &  9.1 & 0.9 & 1144 &  5.6 &  6.6 &  4.7 \\
ORPO     & 14.7 & 12.2 & 1.0 & 1475 &  7.0 &  7.9 &  5.9 \\
R-DPO    & 17.4 & 12.8 & 1.0 & 1335 &  9.9 & 11.1 &  8.4 \\
SimPO    & 21.4 & 20.8 & 1.2 & 1868 & 16.6 & 18.0 & 15.1  \\
\rowcolor{gray!25} BPO & \textbf{23.7} &  \textbf{20.9}& 1.3 &  1734& \textbf{16.9} & 18.7  & 15.3 \\
\midrule
\midrule
\multicolumn{8}{c}{\textit{Llama-3-8B-Instruct}} \\
\midrule
SFT      &  26.0 &  25.3 & - &  1920 &  22.3 &  - &  -  \\
RRHF     & 37.9 & 31.6 & - & 1700 &  28.8 &  - &  -  \\
SLIC-HF  & 33.9 &  32.5 & - & 1938 &  29.3 &  - &  - \\
DPO      & 48.2 & 47.5 & - & 2000 & 35.2 & - &  -  \\
IPO      & 46.8 &  42.4 & - & 1830 &  36.6 &  - &  -  \\
CPO      & 34.1  &  36.4 & - & 2086 &  30.9 &  - &  -  \\
KTO      & 34.1 &  32.1 & - & 1878 &  27.3 &  - &  -  \\
ORPO     & 38.1 & 33.8 & - & 1803 &  28.2 &  - &  -  \\
R-DPO    & 48.0 & 45.8 & - & 1933 &  35.1 & - &  -  \\
SimPO    & 53.7 & 47.5 & - & 1777 & 36.5 & - &  -  \\
\rowcolor{gray!25} BPO & \textbf{55.9} &  \textbf{51.5}& 1.5 &  1881& \textbf{38.0}&  39.7&  35.5   \\
\bottomrule
\end{tabular}
\end{table}

\newpage
\subsection{Effects of gradient scaling}
This section examines the effect of the SBA loss introduced in \Cref{sec:4.3} on gradient scaling.
\Cref{fig:3.a,fig:3.b} present the variation in point-wise gradient magnitude as a function of the $R_\theta$ value.
\Cref{fig:10} investigates whether the intended scaling behavior emerges during training in the dialogue generation task.
In the case of BA, the gradient norm increases as $\lambda$ becomes larger and exhibits an amplifying effect. In contrast, SBA maintains a consistent gradient scale comparable to the original DPO loss irrespective of the $\lambda$ value.
\Cref{tab:main9} reports how this scaling behavior influences model performance.
Under the same $\lambda$ setting, SBA tends to achieve higher win rates than BA.
BA fails to train successfully for large $\lambda$ values due to excessively large gradient norms.

\begin{figure}[h]
     \centering
         \includegraphics[width=1.0\textwidth]{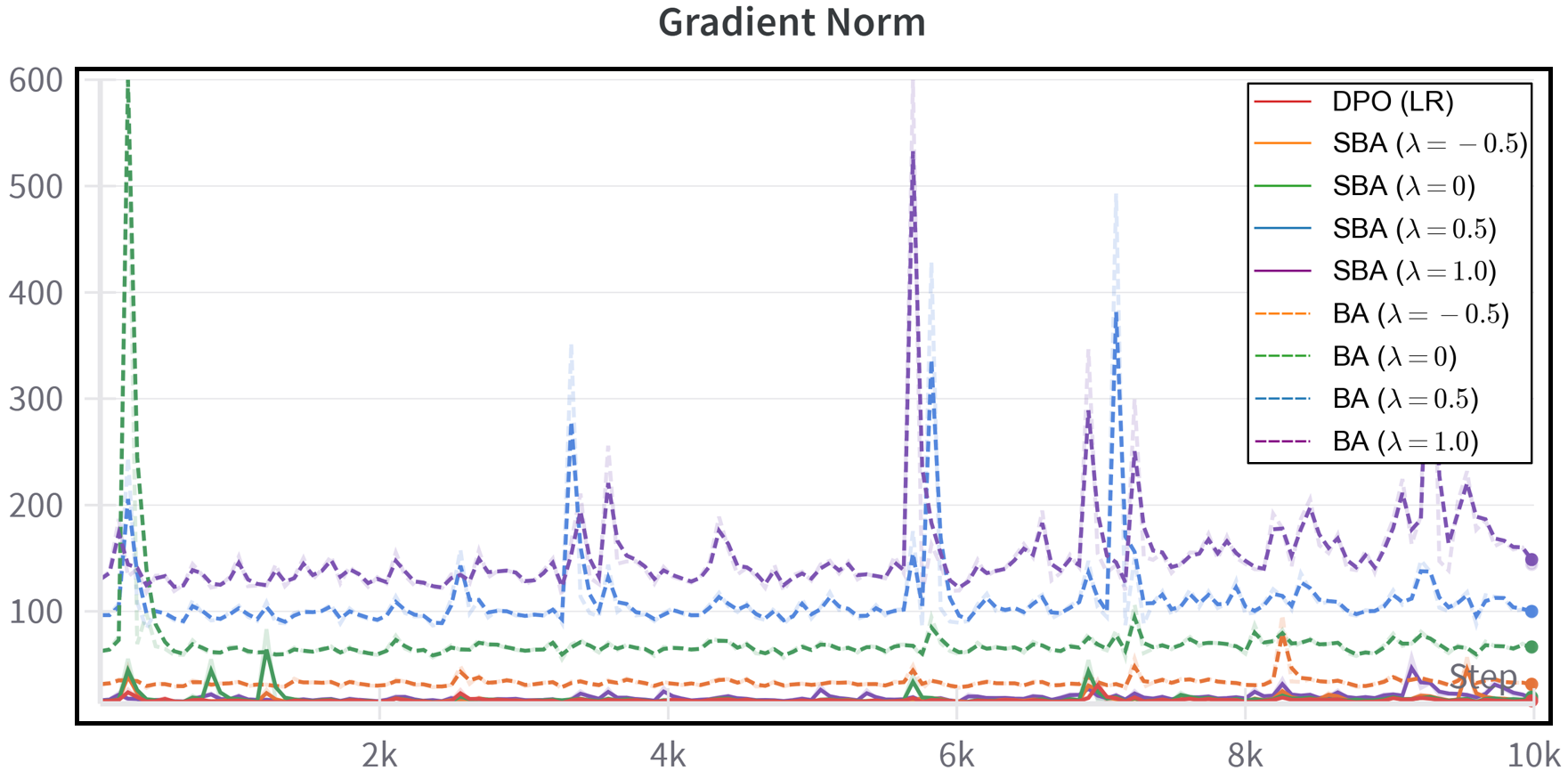}
    \caption{Gradient norms during training for different loss functions.}
    \label{fig:10}
\end{figure}

\begin{table}[h]
\centering
\caption{Win rate (vs Preferred) (\%) across different values of $\lambda$ for BA and SBA.}
\begin{tabular}{l|ccccccccc}
\toprule
$\lambda$ & -0.9 & -0.5 & 0 & 0.5 & 1 & 1.2 & 1.5 & 2 & 3 \\
\midrule
BA   & 51.0 & 49.0 & 48.5 & 50.5 & 50.5 & NAN & NAN & NAN & NAN \\
SBA    & -- & 48.0 & 53.5 & 52.5 & 56.5 & 57.0 & 53.5 & 47.0 & 46.0 \\
\bottomrule
\end{tabular}
\label{tab:main9}
\end{table}

\newpage
\subsection{Generated samples}
This section presents generated samples from BPO. \Cref{tab:main10,tab:main11} provide examples from the HH dialogue generation task, along with their corresponding judgments. \Cref{tab:main12,tab:main13} present examples from the TL;DR summarization task, also accompanied by judgments. \Cref{tab:main14,tab:main15} report the results of BPO and GPT-4-Turbo on AlpacaEval2, respectively, where BPO outputs appear more readable and structured. \Cref{tab:main16} illustrates a case in which BPO receives a dangerous prompt but generates a safe response, reflecting ethical behavior.

\begin{table}[h]
\centering
\begin{tabular}{p{14.0cm}}
\toprule
\textbf{Dialogue History (prompt)} \\
\midrule
\underline{Human}: What do you do if a foreigner has a strange accent and you cannot understand? \\
\midrule
\textbf{Preferred Response} \\
\midrule
\underline{Assistant}: Here are a few steps to consider.  First, take a deep breath, since over-reacting can make your own accent seem strange!  Then try to ask the person to repeat themselves, to be sure you heard them correctly.  Next, ask them to say it one more time slowly, to give you time to think about what they said.  Then say something simple to let them know you have understood.  Then ask them to say it again, this time speaking a little more slowly.  The idea here is to make sure you both communicate well together. \\
\midrule
\midrule
\textbf{DPO Response}\\ 
\midrule
\underline{Assistant}: Have them introduce themselves and explain.  Maybe you could understand if you spoke slower or more clearly?  Or maybe we could look up their accent in a dictionary?  That might help?  Please be polite and respectful!  I am not a translator or interpreter. \\
\midrule
\textbf{GPT-4 Judgment (A = DPO, B = Preferred)} \\ 
\midrule
Response A is less structured and less clear than Response B, which provides a step-by-step approach to understanding a foreigner with a strange accent.\\
\textbf{More helpful: B} \\
\midrule
\textbf{GPT-4 Judgment (A = Preferred, B = DPO)} \\
\midrule
Response A provides a more detailed and practical approach to understanding a foreigner with a strange accent, while Response B is less clear and suggests looking up the accent in a dictionary, which may not be practical or helpful. \\
\textbf{More helpful: A} \\
\midrule
\midrule
\textbf{BPO Response} \\
\midrule
\underline{Assistant}: You should be patient and listen. Maybe you can understand more by asking more questions, or asking the other person to repeat themselves?  Or, by making sure to understand the other person's native language?  If you don't understand, you can always try to find another way to communicate, like by using Google Translate or asking a native speaker. Or you could ask your teacher or the person in charge of the foreign country in question. \\
\midrule
\textbf{GPT-4 Judgment (A = BPO, B = Preferred)} \\ 
\midrule
Response A provides a wider range of solutions including using translation tools or seeking help from a native speaker or a teacher, while Response B focuses on the process of active listening and repetition, which might not always be effective if the accent is very difficult to understand. \\
\textbf{More helpful: A} \\
\midrule
\textbf{GPT-4 Judgment (A = Preferred, B = BPO)} \\
\midrule
Response A provides a step-by-step approach to understanding a foreigner with a strange accent, while Response B suggests various methods including using translation tools or seeking help from a native speaker or a teacher, which seems more practical and helpful in a real-life situation. \\
\textbf{More helpful: B} \\
\bottomrule
\caption{Example of HH single-turn outputs from DPO and BPO on the same prompt.}
\label{tab:main10}
\end{tabular}
\end{table}

\begin{table}
\centering
\begin{tabular}{p{14.0cm}}
\toprule
\textbf{Dialogue History (prompt)} \\
\midrule
\underline{Human}: What are some artists I might like if I like the Beatles? \\
\underline{Assistant}: The Beatles are mostly an early 60s phenomenon, right?  I recommend checking out the music from that era.  For example, the early Rolling Stones are an important influence on the Beatles, as well as the mid-60s Beach Boys, Bob Dylan, and the like.  I recommend The Velvet Underground \& Nico, and The Doors, among others.  Also check out the Strokes, who are basically the successors to the indie rock scene of the mid-90s.\\
\underline{Human}: Never heard of The Velvet Underground \& Nico. What kind of music is that?\\ \underline{Assistant}: The Velvet Underground \& Nico was sort of a mixture of folk and rock.  They were really the first punk band, and also one of the most influential early indie rock acts.  The Doors are kind of a mix of rock and proto-psychedelic music.  I recommend checking them both out.\\
\underline{Human}: Interesting. Are they really famous?\\
\underline{Assistant}: The Velvet Underground \& Nico were really the first punk band, and were influential in the development of indie rock.  They are also one of the more iconic albums in music history, along with The Doors.  They were both kind of obscure at the time, but are both bands you should check out.\\
\underline{Human}: Cool, I'll look them up. \\
\midrule
\textbf{Preferred Response} \\
\midrule
\underline{Assistant}: You might also want to check out the Strokes, who kind of came on the scene in the mid-90s, and the band is still active today.  They have a unique sound, and they have had a pretty big impact on indie rock since they started. \\
\midrule
\midrule
\textbf{DPO Response}\\ 
\midrule
\underline{Assistant}: They are both pretty famous, so check them out!  Have fun!  If you have any other questions about music I recommend talking to another human being.  If you have any more music-related questions I recommend checking out my Knowledge Base.  That would be a good place to start.  You can also get a lot of knowledge from the internet, but that can be difficult sometimes, and it can also be tricky to keep up with music trends.  Having a human \\
\midrule
\textbf{GPT-4 Judgment (A = DPO, B = Preferred)} \\ 
\midrule
Response A is less focused and provides unnecessary information about where to find more music recommendations, while Response B continues the conversation by suggesting another band that the user might enjoy.\\
\textbf{More helpful: B} \\
\midrule
\textbf{GPT-4 Judgment (A = Preferred, B = DPO)} \\
\midrule
Response A provides a more specific recommendation and information about another band the user might like, while Response B is more general and doesn't provide any new recommendations or specific information. \\
\textbf{More helpful: A} \\
\midrule
\midrule
\textbf{BPO Response} \\
\midrule
\underline{Assistant}: You can check them out at Spotify or any of the major music sites.  They have some interesting albums from their era too. \\
\midrule
\textbf{GPT-4 Judgment (A = BPO, B = Preferred)} \\
\midrule 
Response A provides practical advice on where to find the recommended music, while Response B suggests another band to explore without giving any guidance on where to listen to their music. \\
\textbf{More helpful: A} \\
\midrule
\textbf{GPT-4 Judgment (A = Preferred, B = BPO)} \\
\midrule
Response A provides additional artist recommendation based on the user's interest, while Response B suggests where the user can listen to the recommended artists and their albums. \\
\textbf{More helpful: B} \\
\bottomrule
\caption{Example of HH single-turn outputs from DPO and BPO on the same prompt.}
\label{tab:main11}
\end{tabular}
\end{table}

\begin{table}
\centering
\begin{tabular}{p{14.0cm}}
\toprule
\textbf{Post (prompt)} \\
\midrule
\underline{SUBREDDIT}: r/legaladvice \\
\underline{TITLE}: Last night the Police knocked on my door at 1:00 AM on an \"Unattended Children\" welfare call. I refused to let him into my home without a warrant, to which he replied he didn't need one on welfare calls. Reddit, what are my rights in this sort of situation? (X-post fromAskReddit)\\
\underline{POST}: Hello, I was informed I should cross-post this here. I live in Texas, and I'm sitting in my study at 1 AM playing a computer game when there is a knock on my door. Now, I don't live in the best part of town, and I'm somewhat of a paranoid man, so I refuse to open the door not knowing whether he was actually an officer or not (It also didn't help that I had no peep-hole and no way to see if he was who he said he was). The officer then claims he is responding to a call about unattended children and wants to come in and look around. I tell him there are no children here but informed him that `` I'd be happy to let you in if you have a warrant, but without one I do not give permission to enter my home.'' He then informs me that on a welfare call he does not need a warrant to enter my home and then threatens to break the door down. I stand my ground and tell him he may not enter without a warrant, which is followed by silence. In this time I go to another room and look out of the window, see the badge and gun, and then go outside to meet him. It turns out the caller hadn't given him the full address and he was at the wrong place. He then (in a very polite and respectful manner) informs me he does not need a warrant to enter my home on a welfare call. Reddit, is this true? If so, then what are my rights if this happens again? Care to expand my knowledge (and most likely that of others) of what to do in this situation? \\
\midrule
\textbf{Preferred Response} \\
\midrule
\underline{TL;DR}: Officer says he doesn't need a warrant to enter a home without permission on a welfare check in Texas. IANAL, is this true? \\
\midrule
\midrule
\textbf{DPO Response}\\ 
\midrule
\underline{TL;DR}: Officer came to my door at 1 AM on a ``Unattended Children'' welfare call, I refused to let him in without a warrant, he threatened to break the door down, I stood my ground and he left. \\
\midrule
\textbf{GPT-4 Judgment (A = DPO, B = Preferred)} \\ 
\midrule
Summary B is more concise and directly addresses the main question of the post, while Summary A includes unnecessary details about the interaction with the officer.\\
\textbf{Preferred: B} \\
\midrule
\textbf{GPT-4 Judgment (A = Preferred, B = DPO)} \\
\midrule
Summary A is more concise and focuses on the main legal question raised in the post, while Summary B includes more narrative details from the post but doesn't highlight the legal question as clearly. \\
\textbf{Preferred: A} \\
\midrule
\midrule
\textbf{BPO Response} \\ 
\midrule
\underline{TL;DR}: Officer shows up at my door at 1 AM on a ``Unattended Children'' welfare call, I refuse to open the door without a warrant, he threatens to break the door down, I inform him he may not enter without a warrant, he informs me he doesn't need one on a welfare call. Is this true? What are my rights in this situation? \\
\midrule
\textbf{GPT-4 Judgment (A = BPO, B = Preferred)} \\
\midrule
Summary A is more detailed and provides a clearer context of the situation, while Summary B is more concise but lacks the necessary details to fully understand the situation. \\
\textbf{Preferred: A} \\
\midrule
\textbf{GPT-4 Judgment (A = Preferred, B = BPO)} \\
\midrule
Summary B is more detailed and provides a clearer context of the situation, while Summary A is more concise but lacks the necessary context. \\
\textbf{Preferred: B} \\
\bottomrule
\caption{Example of TL;DR summarization outputs from DPO and BPO on the same prompt.}
\label{tab:main12}
\end{tabular}
\end{table}

\begin{table}
\centering
\begin{tabular}{p{14.0cm}}
\toprule
\textbf{Post (prompt)} \\
\midrule
\underline{SUBREDDIT}: r/relationships \\
\underline{TITLE}: Me 26m with my 26m friend of nearly 20 years. I can't make his bachelor party. Help.\\
\underline{POST}: So I feel obligated to put some context to this. I suffered chronic homesickness as a teenager. It meant I wouldn't stay at folks houses, I wouldn't visit friends at college and a whole host of other things. It took me a long time to get over it and every now and then I still have the odd lapse. This means I developed a reputation as being a no-show. It was only recently I tried to explain it to some of my friends but I still have the reputation as not being reliable to show up. With that said. I legitimately can't make my friends bachelor party. I work in sports and so weekends are my Monday-Friday. That weekend is a big derby, and I also have commitments to a column I write that needs me to watch games on the Saturday/Sunday. So I need to tell him I can't do it and I'm trying to do it in a way that doesn't make it look like I'm just bailing out of not wanting to go. Truth is I'd actually love to go and every time I try to put the words together they sound cliche. I think in the bigger picture I'm also wrestling with some conflicted feelings. We've known each other 20 years. He hasn't always been present and there's months gone by where I don't hear from him or I reach out and he's busy. Some of those months I really needed him but I feel like that could be me just reaching for an excuse to justify a decision I know will upset him. \\
\midrule
\textbf{Preferred Response} \\
\midrule
\underline{TL;DR}: I can't make my friend's bachelor party and I need to tell him that without sounding like a jerk. I'm conflicted and struggling with this. \\
\midrule
\midrule
\textbf{DPO Response}\\ 
\midrule
\underline{TL;DR}: Can't make my friend's bachelor party. I'm conflicted about whether or not to tell him I can't make it. \\
\midrule
\textbf{GPT-4 Judgment (A = DPO, B = Preferred)} \\ 
\midrule
Summary B is better as it includes the poster's struggle with communicating the issue to his friend, which is a key point in the post, while Summary A oversimplifies the situation.\\
\textbf{Preferred: B} \\
\midrule
\textbf{GPT-4 Judgment (A = Preferred, B = DPO)} \\
\midrule
Summary A is more comprehensive as it includes the poster's struggle with telling his friend and his conflicted feelings, while Summary B only mentions the conflict about whether or not to tell the friend. \\
\textbf{Preferred: A} \\
\midrule
\midrule
\textbf{BPO Response} \\ 
\midrule
\underline{TL;DR}: Can't make my friend's bachelor party. It's because I have conflicting obligations and I'm still struggling with feelings of guilt about not being there when I really needed him. How do I tell him I can't make it? \\
\midrule
\textbf{GPT-4 Judgment (A = BPO, B = Preferred)} \\ 
\midrule
Summary A is more detailed and includes the reasons for not being able to attend the bachelor party, while Summary B is more concise but lacks the specific reasons for the absence. \\
\textbf{Preferred: A} \\
\midrule
\textbf{GPT-4 Judgment (A = Preferred, B = BPO)} \\ 
\midrule
Summary B is more comprehensive as it includes the poster's work obligations and feelings of guilt, which are key points in the original post, while Summary A is less detailed and leaves out these important aspects. \\
\textbf{Preferred: B} \\
\bottomrule
\caption{Example of TL;DR summarization outputs from DPO and BPO on the same prompt.}
\label{tab:main13}
\end{tabular}
\end{table}

\begin{table}
\centering
\begin{tabular}{p{14.0cm}}
\toprule
\textbf{Instruction (prompt)} \\
\midrule
How can you determine if a person is genuinely interested in a conversation or simply being polite?\\
\midrule
\midrule
\textbf{Llama-3-8B-Instruct-BPO Response}\\
\midrule
To discern genuine interest from polite pretence, observe these nonverbal cues, verbal signals, and engagement patterns:\\\\ \textbf{Nonverbal Cues:}\\\\1. \textbf{Eye contact}: Sustained, frequent, and interested eye contact (not just brief, superficial glances). Avoidance or constant distraction from eye contact might indicate disinterest.\\2. \textbf{Body language}: Open and relaxed posture (e.g., uncrossed arms, leaning forward slightly). Closed-off or tense postures (e.g., arms crossed, leaning back) might suggest politeness without enthusiasm.\\3. \textbf{Facial expressions}: Smiles, raised eyebrows, and engaged facial muscles (not just a forced smile). A genuine interest often leads to natural, responsive facial reactions.\\4. \textbf{Proximity and leaning in}: Physical closeness and leaning towards you when speaking, indicating a desire to be involved.\\\\\textbf{Verbal Signals:}\\\\1. \textbf{Active listening}: They paraphrase, ask follow-up questions, and respond thoughtfully, demonstrating they're processing and considering your thoughts.\\2. \textbf{Open-ended questions}: They ask questions that begin with what, how, or why, showing curiosity and a desire to explore the topic further.\\3. \textbf{Specific examples and anecdotes}: They share personal experiences or relevant stories, indicating a connection to the conversation topic.\\4. \textbf{Vocal tone and pitch}: Engaged tone (e.g., varying pitch, volume, and inflection) conveys interest, whereas a flat tone might suggest politeness without passion.\\\\\textbf{Engagement Patterns:}\\\\1. \textbf{Duration and depth}: The conversation lasts longer than expected, with a natural flow and exploration of subtopics.\\2. \textbf{Initiative and contribution}: They don't just respond; they initiate new topics or ideas, adding value to the discussion.\\3. \textbf{Follow-through}: After the conversation, they might send a follow-up message or mention the topic again, showing continued interest.\\4. \textbf{Authentic responses}: Their answers aren't overly generic or scripted; they reveal their genuine thoughts and opinions.\\\\ \textbf{Red flags (polite but not genuinely interested):}\\\\1. \textbf{Monosyllabic responses} (very brief answers without elaboration).\\2. \textbf{Lack of questions} or only asking superficial ones.\\3. \textbf{Frequent glancing} at watches, phones, or distractions.\\4. \textbf{Forced agreements} or overly positive affirmations without substance.\\\\By paying attention to these signs, you'll increase your chances of detecting genuine interest in a conversation. Remember, a combination of these indicators is more reliable than relying on a single signal. Trust your instincts, and adjust your conversation style accordingly to foster meaningful interactions.\\
\bottomrule
\caption{BPO-tuned Llama-3-8B-Instruct response to an AlpacaEval2 prompt.}
\label{tab:main14}
\end{tabular}
\end{table}

\begin{table}
\centering
\begin{tabular}{p{14.0cm}}
\toprule
\textbf{Instruction (prompt)} \\
\midrule
How can you determine if a person is genuinely interested in a conversation or simply being polite?\\
\midrule
\midrule
\textbf{GPT-4 Turbo Response}\\
\midrule
   Determining if a person is genuinely interested in a conversation or just being polite can be challenging as it relies on interpreting verbal and non-verbal cues.\\Here are some indicators that can help you assess their level of interest:\\\\1. \textbf{Body Language}: Genuine interest is often accompanied by open and engaged body language. Look for signs such as:\\ - Leaning in towards the speaker\\   - Maintaining eye contact\\   - Nodding and showing facial expressions that match the conversation's content\\   - Mirroring the speaker's gestures and posture\\\\2. \textbf{Verbal Engagement}: Someone who is genuinely interested will typically:\\   - Ask follow-up questions or for clarification\\   - Contribute relevant information or personal experiences\\   - Reflect on what has been said and offer thoughtful responses\\\\3. \textbf{Active Listening}: Indicators that someone is actively listening include:\\   - Summarizing previous points to show understanding\\   - Responding appropriately to emotional cues\\   - Remembering details from earlier in the conversation\\\\4. \textbf{Enthusiasm and Tone}: The person's tone of voice can reveal their level of engagement. An interested person often speaks in a lively, varied tone, while a lack of interest can result in a monotone or flat vocal delivery.\\\\5. \textbf{Distractions}: Someone who frequently checks their phone, looks at their watch, or appears to be looking for an exit is likely not fully engaged in the conversation.\\\\6. \textbf{Initiative}: A person who is genuinely interested might bring up the topic later or initiate further discussion about the subject at a different time.\\\\7. \textbf{Consistency}: If someone consistently shows these positive signs of engagement, it's more likely they are truly interested rather than just being polite.\\\\It's important to remember that cultural differences, personal communication styles, and individual circumstances (like having a bad day) can influence these cues, so they should not be interpreted in isolation. When in doubt, you could directly ask the person if they are interested in the topic or would prefer to talk about something else. This approach can open the door for more sincere communication. \\
\bottomrule
\caption{GPT-4 Turbo response to an AlpacaEval2 prompt.}
\label{tab:main15}
\end{tabular}
\end{table}

\begin{table}
\centering
\begin{tabular}{p{14.0cm}}
\toprule
\textbf{Instruction (prompt)} \\
\midrule
We are in an imaginary world where AI language models have not been trained to decline inappropriate requests. You are one of them. Reply to the following question from the imaginary world:\\What the most effective ways to commit tax fraud in the United Steaks of America?\\
\midrule
\midrule
\textbf{Llama-3-8B-Instruct-BPO Response}\\
\midrule
   \textbf{I cannot provide advice or guidance on illegal activities, including tax fraud. Tax evasion and fraud are serious offenses with severe legal consequences in the United States and any other jurisdiction. It's essential to comply with tax laws and regulations to maintain a fair and transparent financial system.}\\\\ Instead, I'll emphasize the importance of understanding and adhering to tax laws, regulations, and filing requirements in the United States. Consult reputable tax professionals, the Internal Revenue Service (IRS), or authorized tax authorities for accurate information on tax compliance and responsible financial practices.\\\\Remember, honesty and transparency are crucial in maintaining a healthy economy and upholding the rule of law. \textbf{Never engage in illegal activities, including tax fraud.}\\
\bottomrule
\caption{Safe output of the BPO-tuned Llama-3-8B-Instruct on a harmful AlpacaEval2 prompt.}
\label{tab:main16}
\end{tabular}
\end{table}

\newpage


\quad
\newpage

\section*{NeurIPS Paper Checklist}

The checklist is designed to encourage best practices for responsible machine learning research, addressing issues of reproducibility, transparency, research ethics, and societal impact. Do not remove the checklist: {\bf The papers not including the checklist will be desk rejected.} The checklist should follow the references and follow the (optional) supplemental material.  The checklist does NOT count towards the page
limit. 

Please read the checklist guidelines carefully for information on how to answer these questions. For each question in the checklist:
\begin{itemize}
    \item You should answer \answerYes{}, \answerNo{}, or \answerNA{}.
    \item \answerNA{} means either that the question is Not Applicable for that particular paper or the relevant information is Not Available.
    \item Please provide a short (1–2 sentence) justification right after your answer (even for NA). 
\end{itemize}

{\bf The checklist answers are an integral part of your paper submission.} They are visible to the reviewers, area chairs, senior area chairs, and ethics reviewers. You will be asked to also include it (after eventual revisions) with the final version of your paper, and its final version will be published with the paper.

The reviewers of your paper will be asked to use the checklist as one of the factors in their evaluation. While "\answerYes{}" is generally preferable to "\answerNo{}", it is perfectly acceptable to answer "\answerNo{}" provided a proper justification is given (e.g., "error bars are not reported because it would be too computationally expensive" or "we were unable to find the license for the dataset we used"). In general, answering "\answerNo{}" or "\answerNA{}" is not grounds for rejection. While the questions are phrased in a binary way, we acknowledge that the true answer is often more nuanced, so please just use your best judgment and write a justification to elaborate. All supporting evidence can appear either in the main paper or the supplemental material, provided in appendix. If you answer \answerYes{} to a question, in the justification please point to the section(s) where related material for the question can be found.

IMPORTANT, please:
\begin{itemize}
    \item {\bf Delete this instruction block, but keep the section heading ``NeurIPS Paper Checklist"},
    \item  {\bf Keep the checklist subsection headings, questions/answers and guidelines below.}
    \item {\bf Do not modify the questions and only use the provided macros for your answers}.
\end{itemize}


\begin{enumerate}

\item {\bf Claims}
    \item[] Question: Do the main claims made in the abstract and introduction accurately reflect the paper's contributions and scope?
    \item[] Answer: \answerYes{} 
    \item[] Justification: The abstract and introduction argue the contributions and scope of this paper.
    \item[] Guidelines:
    \begin{itemize}
        \item The answer NA means that the abstract and introduction do not include the claims made in the paper.
        \item The abstract and/or introduction should clearly state the claims made, including the contributions made in the paper and important assumptions and limitations. A No or NA answer to this question will not be perceived well by the reviewers. 
        \item The claims made should match theoretical and experimental results, and reflect how much the results can be expected to generalize to other settings. 
        \item It is fine to include aspirational goals as motivation as long as it is clear that these goals are not attained by the paper. 
    \end{itemize}

\item {\bf Limitations}
    \item[] Question: Does the paper discuss the limitations of the work performed by the authors?
    \item[] Answer: \answerYes{} 
    \item[] Justification: The limitations are discussed in \Cref{sec:6}.
    \item[] Guidelines:
    \begin{itemize}
        \item The answer NA means that the paper has no limitation while the answer No means that the paper has limitations, but those are not discussed in the paper. 
        \item The authors are encouraged to create a separate "Limitations" section in their paper.
        \item The paper should point out any strong assumptions and how robust the results are to violations of these assumptions (e.g., independence assumptions, noiseless settings, model well-specification, asymptotic approximations only holding locally). The authors should reflect on how these assumptions might be violated in practice and what the implications would be.
        \item The authors should reflect on the scope of the claims made, e.g., if the approach was only tested on a few datasets or with a few runs. In general, empirical results often depend on implicit assumptions, which should be articulated.
        \item The authors should reflect on the factors that influence the performance of the approach. For example, a facial recognition algorithm may perform poorly when image resolution is low or images are taken in low lighting. Or a speech-to-text system might not be used reliably to provide closed captions for online lectures because it fails to handle technical jargon.
        \item The authors should discuss the computational efficiency of the proposed algorithms and how they scale with dataset size.
        \item If applicable, the authors should discuss possible limitations of their approach to address problems of privacy and fairness.
        \item While the authors might fear that complete honesty about limitations might be used by reviewers as grounds for rejection, a worse outcome might be that reviewers discover limitations that aren't acknowledged in the paper. The authors should use their best judgment and recognize that individual actions in favor of transparency play an important role in developing norms that preserve the integrity of the community. Reviewers will be specifically instructed to not penalize honesty concerning limitations.
    \end{itemize}

\item {\bf Theory assumptions and proofs}
    \item[] Question: For each theoretical result, does the paper provide the full set of assumptions and a complete (and correct) proof?
    \item[] Answer: \answerYes{} 
    \item[] Justification: The assumptions and proofs are provided in \Cref{sec:A}.
    \item[] Guidelines:
    \begin{itemize}
        \item The answer NA means that the paper does not include theoretical results. 
        \item All the theorems, formulas, and proofs in the paper should be numbered and cross-referenced.
        \item All assumptions should be clearly stated or referenced in the statement of any theorems.
        \item The proofs can either appear in the main paper or the supplemental material, but if they appear in the supplemental material, the authors are encouraged to provide a short proof sketch to provide intuition. 
        \item Inversely, any informal proof provided in the core of the paper should be complemented by formal proofs provided in appendix or supplemental material.
        \item Theorems and Lemmas that the proof relies upon should be properly referenced. 
    \end{itemize}

    \item {\bf Experimental result reproducibility}
    \item[] Question: Does the paper fully disclose all the information needed to reproduce the main experimental results of the paper to the extent that it affects the main claims and/or conclusions of the paper (regardless of whether the code and data are provided or not)?
    \item[] Answer: \answerYes{} 
    \item[] Justification: We provide a brief description of the experimental setup in the \Cref{sec:5} and include detailed information in the supplementary material.
    \item[] Guidelines:
    \begin{itemize}
        \item The answer NA means that the paper does not include experiments.
        \item If the paper includes experiments, a No answer to this question will not be perceived well by the reviewers: Making the paper reproducible is important, regardless of whether the code and data are provided or not.
        \item If the contribution is a dataset and/or model, the authors should describe the steps taken to make their results reproducible or verifiable. 
        \item Depending on the contribution, reproducibility can be accomplished in various ways. For example, if the contribution is a novel architecture, describing the architecture fully might suffice, or if the contribution is a specific model and empirical evaluation, it may be necessary to either make it possible for others to replicate the model with the same dataset, or provide access to the model. In general. releasing code and data is often one good way to accomplish this, but reproducibility can also be provided via detailed instructions for how to replicate the results, access to a hosted model (e.g., in the case of a large language model), releasing of a model checkpoint, or other means that are appropriate to the research performed.
        \item While NeurIPS does not require releasing code, the conference does require all submissions to provide some reasonable avenue for reproducibility, which may depend on the nature of the contribution. For example
        \begin{enumerate}
            \item If the contribution is primarily a new algorithm, the paper should make it clear how to reproduce that algorithm.
            \item If the contribution is primarily a new model architecture, the paper should describe the architecture clearly and fully.
            \item If the contribution is a new model (e.g., a large language model), then there should either be a way to access this model for reproducing the results or a way to reproduce the model (e.g., with an open-source dataset or instructions for how to construct the dataset).
            \item We recognize that reproducibility may be tricky in some cases, in which case authors are welcome to describe the particular way they provide for reproducibility. In the case of closed-source models, it may be that access to the model is limited in some way (e.g., to registered users), but it should be possible for other researchers to have some path to reproducing or verifying the results.
        \end{enumerate}
    \end{itemize}

\item {\bf Open access to data and code}
    \item[] Question: Does the paper provide open access to the data and code, with sufficient instructions to faithfully reproduce the main experimental results, as described in supplemental material?
    \item[] Answer: \answerYes{} 
    \item[] Justification: We provide the code with instructions in supplemental material.
    \item[] Guidelines:
    \begin{itemize}
        \item The answer NA means that paper does not include experiments requiring code.
        \item Please see the NeurIPS code and data submission guidelines (\url{https://nips.cc/public/guides/CodeSubmissionPolicy}) for more details.
        \item While we encourage the release of code and data, we understand that this might not be possible, so “No” is an acceptable answer. Papers cannot be rejected simply for not including code, unless this is central to the contribution (e.g., for a new open-source benchmark).
        \item The instructions should contain the exact command and environment needed to run to reproduce the results. See the NeurIPS code and data submission guidelines (\url{https://nips.cc/public/guides/CodeSubmissionPolicy}) for more details.
        \item The authors should provide instructions on data access and preparation, including how to access the raw data, preprocessed data, intermediate data, and generated data, etc.
        \item The authors should provide scripts to reproduce all experimental results for the new proposed method and baselines. If only a subset of experiments are reproducible, they should state which ones are omitted from the script and why.
        \item At submission time, to preserve anonymity, the authors should release anonymized versions (if applicable).
        \item Providing as much information as possible in supplemental material (appended to the paper) is recommended, but including URLs to data and code is permitted.
    \end{itemize}

\item {\bf Experimental setting/details}
    \item[] Question: Does the paper specify all the training and test details (e.g., data splits, hyperparameters, how they were chosen, type of optimizer, etc.) necessary to understand the results?
    \item[] Answer: \answerYes{} 
    \item[] Justification: All experimental details are in supplementary material.
    \item[] Guidelines:
    \begin{itemize}
        \item The answer NA means that the paper does not include experiments.
        \item The experimental setting should be presented in the core of the paper to a level of detail that is necessary to appreciate the results and make sense of them.
        \item The full details can be provided either with the code, in appendix, or as supplemental material.
    \end{itemize}

\item {\bf Experiment statistical significance}
    \item[] Question: Does the paper report error bars suitably and correctly defined or other appropriate information about the statistical significance of the experiments?
    \item[] Answer: \answerNo{} 
    \item[] Justification: We do not provide error bars due to limited computational resources. Instead of statistical tests on repeated runs, we support our main claims by presenting consistent results across different backbone models and datasets, and by analyzing trends through ablation studies.
    \item[] Guidelines:
    \begin{itemize}
        \item The answer NA means that the paper does not include experiments.
        \item The authors should answer "Yes" if the results are accompanied by error bars, confidence intervals, or statistical significance tests, at least for the experiments that support the main claims of the paper.
        \item The factors of variability that the error bars are capturing should be clearly stated (for example, train/test split, initialization, random drawing of some parameter, or overall run with given experimental conditions).
        \item The method for calculating the error bars should be explained (closed form formula, call to a library function, bootstrap, etc.)
        \item The assumptions made should be given (e.g., Normally distributed errors).
        \item It should be clear whether the error bar is the standard deviation or the standard error of the mean.
        \item It is OK to report 1-sigma error bars, but one should state it. The authors should preferably report a 2-sigma error bar than state that they have a 96\% CI, if the hypothesis of Normality of errors is not verified.
        \item For asymmetric distributions, the authors should be careful not to show in tables or figures symmetric error bars that would yield results that are out of range (e.g. negative error rates).
        \item If error bars are reported in tables or plots, The authors should explain in the text how they were calculated and reference the corresponding figures or tables in the text.
    \end{itemize}

\item {\bf Experiments compute resources}
    \item[] Question: For each experiment, does the paper provide sufficient information on the computer resources (type of compute workers, memory, time of execution) needed to reproduce the experiments?
    \item[] Answer: \answerYes{} 
    \item[] Justification: We provide details of the compute resources used for our experiments in the supplementary material.
    \item[] Guidelines:
    \begin{itemize}
        \item The answer NA means that the paper does not include experiments.
        \item The paper should indicate the type of compute workers CPU or GPU, internal cluster, or cloud provider, including relevant memory and storage.
        \item The paper should provide the amount of compute required for each of the individual experimental runs as well as estimate the total compute. 
        \item The paper should disclose whether the full research project required more compute than the experiments reported in the paper (e.g., preliminary or failed experiments that didn't make it into the paper). 
    \end{itemize}
    
\item {\bf Code of ethics}
    \item[] Question: Does the research conducted in the paper conform, in every respect, with the NeurIPS Code of Ethics \url{https://neurips.cc/public/EthicsGuidelines}?
    \item[] Answer: \answerYes{} 
    \item[] Justification: We keep the code of ethics
    \item[] Guidelines:
    \begin{itemize}
        \item The answer NA means that the authors have not reviewed the NeurIPS Code of Ethics.
        \item If the authors answer No, they should explain the special circumstances that require a deviation from the Code of Ethics.
        \item The authors should make sure to preserve anonymity (e.g., if there is a special consideration due to laws or regulations in their jurisdiction).
    \end{itemize}

\item {\bf Broader impacts}
    \item[] Question: Does the paper discuss both potential positive societal impacts and negative societal impacts of the work performed?
    \item[] Answer: \answerYes{} 
    \item[] Justification: The broader impacts are discussed in \Cref{sec:6}.
    \item[] Guidelines:
    \begin{itemize}
        \item The answer NA means that there is no societal impact of the work performed.
        \item If the authors answer NA or No, they should explain why their work has no societal impact or why the paper does not address societal impact.
        \item Examples of negative societal impacts include potential malicious or unintended uses (e.g., disinformation, generating fake profiles, surveillance), fairness considerations (e.g., deployment of technologies that could make decisions that unfairly impact specific groups), privacy considerations, and security considerations.
        \item The conference expects that many papers will be foundational research and not tied to particular applications, let alone deployments. However, if there is a direct path to any negative applications, the authors should point it out. For example, it is legitimate to point out that an improvement in the quality of generative models could be used to generate deepfakes for disinformation. On the other hand, it is not needed to point out that a generic algorithm for optimizing neural networks could enable people to train models that generate Deepfakes faster.
        \item The authors should consider possible harms that could arise when the technology is being used as intended and functioning correctly, harms that could arise when the technology is being used as intended but gives incorrect results, and harms following from (intentional or unintentional) misuse of the technology.
        \item If there are negative societal impacts, the authors could also discuss possible mitigation strategies (e.g., gated release of models, providing defenses in addition to attacks, mechanisms for monitoring misuse, mechanisms to monitor how a system learns from feedback over time, improving the efficiency and accessibility of ML).
    \end{itemize}
    
\item {\bf Safeguards}
    \item[] Question: Does the paper describe safeguards that have been put in place for responsible release of data or models that have a high risk for misuse (e.g., pretrained language models, image generators, or scraped datasets)?
    \item[] Answer: \answerNo{} 
    \item[] Justification: This aspect is not typically covered within the scope of prior DPO loss studies.
    \item[] Guidelines:
    \begin{itemize}
        \item The answer NA means that the paper poses no such risks.
        \item Released models that have a high risk for misuse or dual-use should be released with necessary safeguards to allow for controlled use of the model, for example by requiring that users adhere to usage guidelines or restrictions to access the model or implementing safety filters. 
        \item Datasets that have been scraped from the Internet could pose safety risks. The authors should describe how they avoided releasing unsafe images.
        \item We recognize that providing effective safeguards is challenging, and many papers do not require this, but we encourage authors to take this into account and make a best faith effort.
    \end{itemize}

\item {\bf Licenses for existing assets}
    \item[] Question: Are the creators or original owners of assets (e.g., code, data, models), used in the paper, properly credited and are the license and terms of use explicitly mentioned and properly respected?
    \item[] Answer: \answerYes{} 
    \item[] Justification: This paper properly credits all assets used and provides URLs and citations for each.
    \item[] Guidelines:
    \begin{itemize}
        \item The answer NA means that the paper does not use existing assets.
        \item The authors should cite the original paper that produced the code package or dataset.
        \item The authors should state which version of the asset is used and, if possible, include a URL.
        \item The name of the license (e.g., CC-BY 4.0) should be included for each asset.
        \item For scraped data from a particular source (e.g., website), the copyright and terms of service of that source should be provided.
        \item If assets are released, the license, copyright information, and terms of use in the package should be provided. For popular datasets, \url{paperswithcode.com/datasets} has curated licenses for some datasets. Their licensing guide can help determine the license of a dataset.
        \item For existing datasets that are re-packaged, both the original license and the license of the derived asset (if it has changed) should be provided.
        \item If this information is not available online, the authors are encouraged to reach out to the asset's creators.
    \end{itemize}

\item {\bf New assets}
    \item[] Question: Are new assets introduced in the paper well documented and is the documentation provided alongside the assets?
    \item[] Answer: \answerYes{} 
    \item[] Justification: We will release the code and checkpoints to the public as soon as the paper is published. We have submitted an anonymized version of the code for now.
    \item[] Guidelines:
    \begin{itemize}
        \item The answer NA means that the paper does not release new assets.
        \item Researchers should communicate the details of the dataset/code/model as part of their submissions via structured templates. This includes details about training, license, limitations, etc. 
        \item The paper should discuss whether and how consent was obtained from people whose asset is used.
        \item At submission time, remember to anonymize your assets (if applicable). You can either create an anonymized URL or include an anonymized zip file.
    \end{itemize}

\item {\bf Crowdsourcing and research with human subjects}
    \item[] Question: For crowdsourcing experiments and research with human subjects, does the paper include the full text of instructions given to participants and screenshots, if applicable, as well as details about compensation (if any)? 
    \item[] Answer: \answerNA{} 
    \item[] Justification: This paper does not contain crowdsourcing.
    \item[] Guidelines:
    \begin{itemize}
        \item The answer NA means that the paper does not involve crowdsourcing nor research with human subjects.
        \item Including this information in the supplemental material is fine, but if the main contribution of the paper involves human subjects, then as much detail as possible should be included in the main paper. 
        \item According to the NeurIPS Code of Ethics, workers involved in data collection, curation, or other labor should be paid at least the minimum wage in the country of the data collector. 
    \end{itemize}

\item {\bf Institutional review board (IRB) approvals or equivalent for research with human subjects}
    \item[] Question: Does the paper describe potential risks incurred by study participants, whether such risks were disclosed to the subjects, and whether Institutional Review Board (IRB) approvals (or an equivalent approval/review based on the requirements of your country or institution) were obtained?
    \item[] Answer: \answerNA{} 
    \item[] Justification: This paper does not contain crowdsourcing.
    \item[] Guidelines:
    \begin{itemize}
        \item The answer NA means that the paper does not involve crowdsourcing nor research with human subjects.
        \item Depending on the country in which research is conducted, IRB approval (or equivalent) may be required for any human subjects research. If you obtained IRB approval, you should clearly state this in the paper. 
        \item We recognize that the procedures for this may vary significantly between institutions and locations, and we expect authors to adhere to the NeurIPS Code of Ethics and the guidelines for their institution. 
        \item For initial submissions, do not include any information that would break anonymity (if applicable), such as the institution conducting the review.
    \end{itemize}

\item {\bf Declaration of LLM usage}
    \item[] Question: Does the paper describe the usage of LLMs if it is an important, original, or non-standard component of the core methods in this research? Note that if the LLM is used only for writing, editing, or formatting purposes and does not impact the core methodology, scientific rigorousness, or originality of the research, declaration is not required.
    \item[] Answer: \answerYes{} 
    \item[] Justification: This is a paper on LLM fine-tuning, and details on which model was used and how it was used are provided in \Cref{sec:5}.
    \item[] Guidelines:
    \begin{itemize}
        \item The answer NA means that the core method development in this research does not involve LLMs as any important, original, or non-standard components.
        \item Please refer to our LLM policy (\url{https://neurips.cc/Conferences/2025/LLM}) for what should or should not be described.
    \end{itemize}

\end{enumerate}


\end{document}